\newtheorem{lemma}{Lemma}[section]
\newtheorem{theorem}{Theorem}[section]
\newtheorem{proof}{Proof}[section]
\newtheorem{definition}{Definition}[section]
\newtheorem{remark}{Remark}[section]
\begin{document}
%\bibliographystyle{plain}
%
% paper title
% Titles are generally capitalized except for words such as a, an, and, as,
% at, but, by, for, in, nor, of, on, or, the, to and up, which are usually
% not capitalized unless they are the first or last word of the title.
% Linebreaks \\ can be used within to get better formatting as desired.
% Do not put math or special symbols in the title.
\title{Monocular Obstacle Avoidance Based on Inverse PPO for Fixed-wing UAVs}
%
%
% author names and IEEE memberships
% note positions of commas and nonbreaking spaces ( ~ ) LaTeX will not break
% a structure at a ~ so this keeps an author's name from being broken across
% two lines.
% use \thanks{} to gain access to the first footnote area
% a separate \thanks must be used for each paragraph as LaTeX2e's \thanks
% was not built to handle multiple paragraphs
%

\author{Haochen Chai\IEEEauthorrefmark{1}, Meimei Su\IEEEauthorrefmark{1}, Yang Lyu\IEEEauthorrefmark{2}, Zhunga Liu, Chunhui Zhao, Quan Pan% <-this % stops a space
\thanks{This work was supported by the National Natural Science Foundation of China under Grant 62203358, Grant 62233014, and Grant 62073264.}
\thanks{\IEEEauthorrefmark{1}Equal contribution.\IEEEauthorrefmark{2}Corresponding author. Haochene Chai, Meimei Su, Yang Lyu, Zhunga Liu, Chunhui Zhao, and Quan Pan are all with the School of Automation, Northwestern Polytechnical University. Xi’an, Shaanxi, 710129, PR China. Email: \url{lyu.yang@nwpu.edu.cn}
}% <-this % stops a space}
% <-this % stops a space
\thanks{}}

% \author{
%     \IEEEauthorblockN{Haochen Chai\IEEEauthorrefmark{1}, Meimei Su\IEEEauthorrefmark{1}, Yang Lyu\IEEEauthorrefmark{2}, Zhunga Liu, Chunhui Zhao, Quan Pan}
%     \IEEEauthorblockA{\IEEEauthorrefmark{1}Equal contribution.}
%     \IEEEauthorblockA{\IEEEauthorrefmark{2}Corresponding author.}
% }
% note the % following the last \IEEEmembership and also \thanks - 
% these prevent an unwanted space from occurring between the last author name
% and the end of the author line. i.e., if you had this:
% 
% \author{....lastname \thanks{...} \thanks{...} }
%                     ^------------^------------^----Do not want these spaces!
%
% a space would be appended to the last name and could cause every name on that
% line to be shifted left slightly. This is one of those "LaTeX things". For
% instance, "\textbf{A} \textbf{B}" will typeset as "A B" not "AB". To get
% "AB" then you have to do: "\textbf{A}\textbf{B}"
% \thanks is no different in this regard, so shield the last } of each \thanks
% that ends a line with a % and do not let a space in before the next \thanks.
% Spaces after \IEEEmembership other than the last one are OK (and needed) as
% you are supposed to have spaces between the names. For what it is worth,
% this is a minor point as most people would not even notice if the said evil
% space somehow managed to creep in.

% The paper headers
\markboth{Journal of \LaTeX\ Class Files,~Vol.~14, No.~8, August~2015}%
{Shell \MakeLowercase{\textit{et al.}}: Bare Demo of IEEEtran.cls for IEEE Journals}
% The only time the second header will appear is for the odd numbered pages
% after the title page when using the twoside option.
% 
% *** Note that you probably will NOT want to include the author's ***
% *** name in the headers of peer review papers.                   ***
% You can use \ifCLASSOPTIONpeerreview for conditional compilation here if
% you desire.

% If you want to put a publisher's ID mark on the page you can do it like
% this:
%\IEEEpubid{0000--0000/00\$00.00~\copyright~2015 IEEE}
% Remember, if you use this you must call \IEEEpubidadjcol in the second
% column for its text to clear the IEEEpubid mark.

% use for special paper notices
%\IEEEspecialpapernotice{(Invited Paper)}

% make the title area
\maketitle

% As a general rule, do not put math, special symbols or citations
% in the abstract or keywords.
\begin{abstract}
Fixed-wing Unmanned Aerial Vehicles (UAVs) are one of the most commonly used platforms for the burgeoning Low-altitude Economy (LAE) and Urban Air Mobility (UAM), due to their long endurance and high-speed capabilities. Classical obstacle avoidance systems, which rely on prior maps or sophisticated sensors, face limitations in unknown low-altitude environments and small UAV platforms. In response, this paper proposes a lightweight deep reinforcement learning (DRL) based UAV collision avoidance system that enables a fixed-wing UAV to avoid unknown obstacles at cruise speed over 30m/s, with only onboard visual sensors. The proposed system employs a single-frame image depth inference module with a streamlined network architecture to ensure real-time obstacle detection, optimized for edge computing devices. After that, a reinforcement learning controller with a novel reward function is designed to balance the target approach and flight trajectory smoothness, satisfying the specific dynamic constraints and stability requirements of a fixed-wing UAV platform. An adaptive entropy adjustment mechanism is introduced to mitigate the exploration-exploitation trade-off inherent in DRL, improving training convergence and obstacle avoidance success rates. Extensive software-in-the-loop and hardware-in-the-loop experiments demonstrate that the proposed framework outperforms other methods in obstacle avoidance efficiency and flight trajectory smoothness and confirm the feasibility of implementing the algorithm on edge devices. The source code is publicly available at \url{https://github.com/ch9397/FixedWing-MonoPPO}.
\end{abstract}

% Note that keywords are not normally used for peerreview papers.
\begin{IEEEkeywords}
Deep Reinforcement Learning, Navigation, Collision Avoidance, Depth Estimaition, Monocular Vision.
\end{IEEEkeywords}

% For peer review papers, you can put extra information on the cover
% page as needed:
% \ifCLASSOPTIONpeerreview
% \begin{center} \bfseries EDICS Category: 3-BBND \end{center}
% \fi
%
% For peerreview papers, this IEEEtran command inserts a page break and
% creates the second title. It will be ignored for other modes.
\IEEEpeerreviewmaketitle

\section{Introduction}
Fixed-wing Unmanned Aerial Vehicles (UAVs) have emerged as key platforms with the development of the Low-Altitude Economy (LAE) \cite{huang2024low} and the rise of Urban Air Mobility (UAM)\cite{cohen2021urban}. Due to the extended endurance and high-speed characteristics of fixed-wing UAVs compared to rotary UAVs\cite{lyu2024fixed}, they are especially preferred in applications such as long-distance cargo delivery \cite{zhang2022adaptive}, wide-area inspection \cite{lungu2020backstepping}, and emergency operations\cite{r1}. When a UAV is flying at low altitudes, autonomously avoiding potential obstacles, such as manmade facilities and terrains,  becomes a key capability to guarantee its flight safety.
%obstacle avoidance architecture of fixed-wing UAVs typically consists of three primary modules including perception, planning, and control. 
Achieving low-altitude obstacle avoidance on a fixed-wing UAV is especially challenging due to its higher speed and inescapably large turning radius compared to other aerial platforms. It requires not only extended environment perception capabilities to realize early collision warning but also more flexible avoidance path planning subject to more rigorous dynamic constraints, to generate feasible avoidance trajectories. 
Classic path planning methods predominantly rely on sampling\cite{karaman2011anytime}, or optimization techniques \cite{mercy2017spline}, which necessitate comprehensive environmental perception, or rely on high-precision prior maps.
The high-speed flight property of fixed-wing UAVs often makes it impractical to produce comprehensive maps or collect accurate environmental data of unknown scenarios\cite{wu2022learning}. Specifically, some scholars utilize costly sensors, such as LiDAR\cite{munoz2022openstreetmap}, or RADAR \cite{popov2023nvradarnet} to enhance environmental perception. However, these approaches not only increase system complexity and cost but also limit their applicability in scenarios with limited onboard resources. Thus, a lightweight navigation framework that removes the dependency on complete environment information or expensive sensors is imperative.
\begin{figure}
    \centering
    \includegraphics[width=1.0\linewidth]{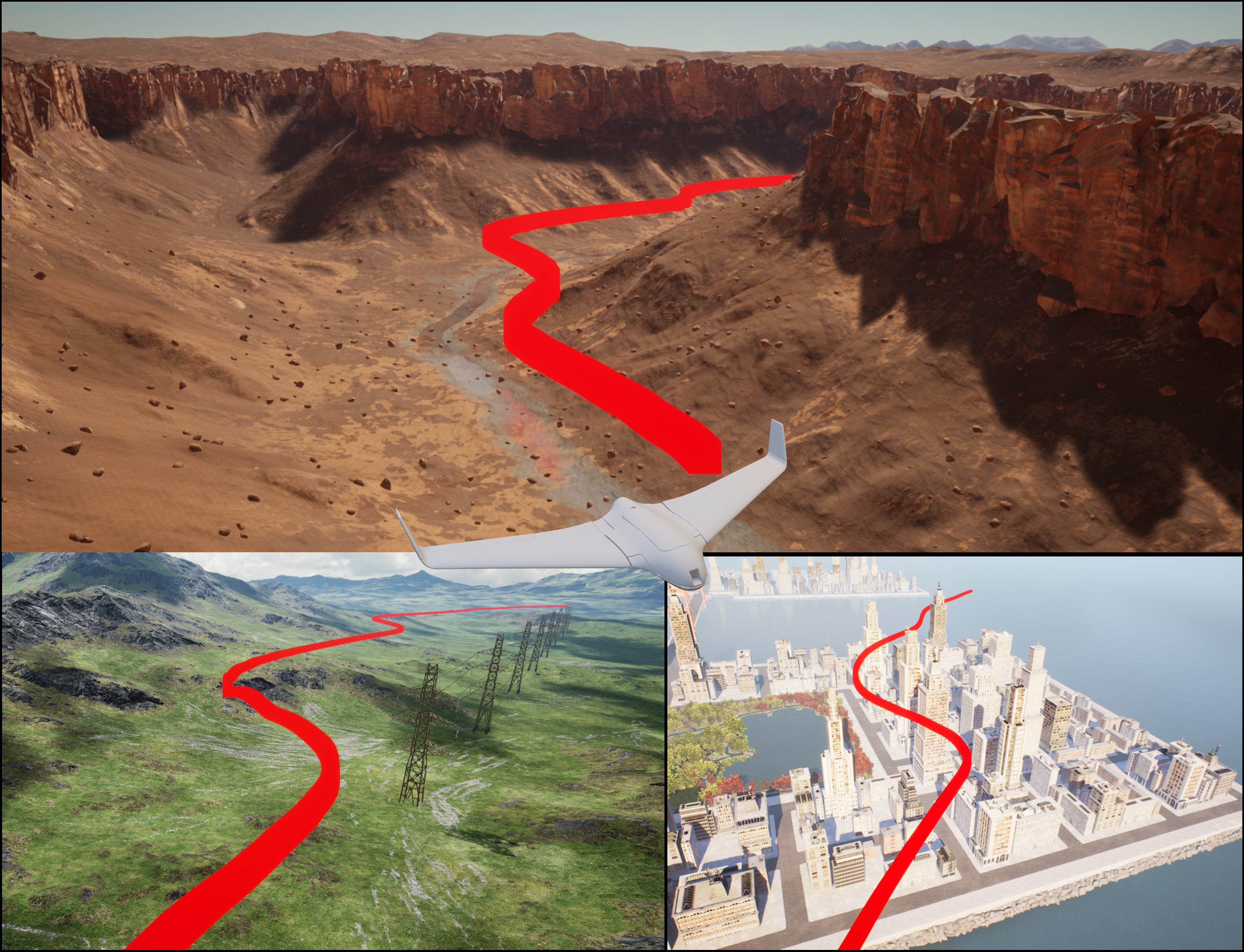}
    \caption{Simulation scenarios and fixed-wing UAV model used for training and validating. Full video link: \url{https://youtu.be/DXP54UI2lbE}}
    \label{fig:fengmian}
\end{figure}
The visual sensor is a most widely used sensor in robot applications with its unparalleled low Size, Weight, and Power consumption (SWaP) and high sensory resolution. 
Compared to classic intelligent heuristic algorithms \cite{mandloi2021unmanned, ma2022bi, wang2022gmr}, DRL demonstrates unique strengths in tackling the obstacle avoidance problem with instantaneous visual information. 
Especially, the success of DRL in gaming applications \cite{kaufmann2023champion} has spurred interest in exploring its potential for visual avoidance \cite{wu2021learn}, a domain where DRL further exemplifies its strength as an end-to-end learning framework \cite{xue2022uav, kulhanek2021visual}. 
However, strategic problems still need to be resolved for integration DRL in fixed-wing UAV obstacle problem. Studies \cite{wu2023human, huang2022vision, wang2019autonomous} have employed reward functions based on distance metrics to produce favorable results. However, these reward functions only consider target arrival and fail to account for trajectory smoothness, which is particularly crucial for flight platforms such as fixed-wing UAVs. When considering the coupling optimization of visual information and obstacle avoidance, some studies use multi-frame depth maps\cite{xie2017towards, de2022depth} as the input of deep reinforcement learning collision avoidance, which undoubtedly increases the systematic latency and jeopardizes the real-time performance, especially on edge computing platforms, which is a critical index for the high-speed fixed-Wing UAV.
Besides the specific strategic problem, a persistent common challenge in utilizing DRL algorithms in robot applications is the imbalance between exploration and exploitation during training. Excessive focus on exploration can hinder algorithm convergence, while insufficient exploration may overlook superior solutions\cite{yuan2022renyi}.

Keen on the above challenges in using DRL to achieve fixed-wing obstacle avoidance, we propose a lightweight deep reinforcement learning framework that utilizes single-frame images captured by low-cost vision sensors as input and exploits the advantages of Proximal Policy Optimization (PPO) \cite{schulman2017proximal} to effectively address obstacle avoidance tasks, offering the following contributions:
%{In summary, the proposed leverages the advantages of Proximal Policy Optimization (PPO) to address the obstacle avoidance problem in unknown environments for fixed-wing UAVs,}
% The main contributions are as follows:
\begin{itemize}
%\item We propose a lightweight end-to-end deep reinforcement learning framework to address the obstacle avoidance problem for fixed-wing UAVs equipped with edge computing devices, using a monocular camera in low-altitude conditions.
% \item We design an inferred reward function that incorporates factors such as distance and angle, formulating the optimization problem as an inverse PPO learning model, to address the dynamics constraints of fixed-wing UAVs and ensure a smooth collision avoidance flight trajectory.
\item Considering the flight stability and dynamic constraints of fixed-wing UAVs, we propose an optimization framework formulated as an inverse PPO learning model, which incorporates a reward function balancing target approach and trajectory maintenance to ensure smooth and efficient collision avoidance flight trajectories.
%给改contribution往平衡探索与利用
\item We introduce a strategy updating mechanism based on adaptive entropy adjustment to address the challenge of local optimization caused by PPO's reliance on historical data during training. This mechanism ensures that our algorithm identifies obstacle-avoidance strategies with higher success rates.
\item We demonstrate that the proposed framework outperforms other methods in obstacle avoidance efficiency and flight trajectory smoothness through software-in-the-loop and hardware-in-the-loop experiments, and confirmed the feasibility of running the algorithm on edge devices. 
%demonstrating that it outperforms other reinforcement learning algorithms across various scenarios and surpasses traditional sample-based method \cite{paranjape2015motion} by generating shorter trajectories, particularly in densely populated and constrained environments. 
%Additionally, we verify the feasibility of deploying the proposed framework on edge computing devices through hardware-in-the-loop simulation.
\end{itemize}

The remainder of this paper is organized as follows. Section II reviews related works, and Section III presents the problem definition and its mathematical formulation. Section IV introduces the inverse PPO. Section V discusses the computational experiments and their results. Section VI concludes the paper.

\section{Related Work}
%{\color{red} We need a logic to review for DRL. I suggest (1) Fixed-wing UAV collision avoidance, and (2) DRL for navigation, Finally we arrive at we can use DRL for fixed-wing UAV collision avoidance.} 

\subsection{Fixed-wing UAV Collision Avoidance}
In contrast to obstacle avoidance algorithms for quadcopters, those for fixed-wing UAVs must account for complex dynamic constraints. For example, a fixed-wing UAV usually has narrower cruise velocity bounds, making it unable to change its velocity abruptly or hover in place like a quadrotor UAV. 

Classic obstacle avoidance algorithms, such as Dijkstra \cite{1959A} and A-star \cite{hart1968formal}, are commonly used in static obstacle environments. However, these methods encounter significant challenges with local minima and often generate trajectories lacking smoothness, particularly in environments with closely spaced obstacles or narrow passages. Another class of algorithms, such as artificial potential field methods \cite{warren1989global}, RRT \cite{noreen2016optimal}, and VFH \cite{babinec2014vfh}, is more suitable for dynamic obstacle environments. Unfortunately, none of these methods address the dynamic constraints of fixed-wing UAVs, necessitating extensive post-processing to smooth the generated paths. To mitigate the reliance on post-processing, many researchers have adopted Dubins curves \cite{mclain2014implementing} for fixed-wing UAV path planning. Dubins curves employ a combination of straight-line and circular arc segments to generate paths that precisely satisfy the kinematic constraints of fixed-wing UAVs.

% Classic obstacle avoidance algorithms, such as Dijkstra \cite{1959A} and A-star \cite{hart1968formal}, can effectively solve problems when sensors have access to accurate and complete obstacle information. These algorithms are generally applicable in static obstacle environments and often require environmental modeling.
% However, these methods face challenges with local minima and lack smoothness when passing through closely spaced obstacles or narrow passages. Another major class of algorithms, including artificial potential field algorithms \cite{warren1989global}, RRT \cite{noreen2016optimal}, and VFH \cite{babinec2014vfh}, is better suited to dynamic obstacle environments. 
% Unfortunately, none of the aforementioned methods satisfy the dynamic constraints of fixed-wing UAVs, and all require post-processing to smooth the generated paths. In contrast, to reduce reliance on post-processing, many researchers have employed Dubins curves \cite{mclain2014implementing} for fixed-wing UAV path planning.
% Dubins utilizes a combination of straight-line curves to realize path planning, and these curves perfectly fit the kinematic constraints of fixed-wing UAVs. 
Different from the above, which puts the obstacle avoidance idea in the planning layer, there are also a large number of studies that consider the obstacle avoidance module in the control layer, for example, approaches based on Model Predictive Control (MPC) \cite{lindqvist2020nonlinear} primarily use optimization theory or continuously update waypoints or routes to prevent collision. Nevertheless, this approach necessitates the creation of highly detailed models of aircraft, with the modeling process for aircraft with varying dynamics being inherently distinct. Consequently, the potential for generalization is limited.

The aforementioned methods face significant challenges when sensors provide only partial or incomplete environmental and obstacle information.
To address this problem, numerous studies have focused on leveraging learning-based algorithms to solve obstacle avoidance under partially observed or unknown environmental conditions.

\subsection{DRL for Visual Navigation}
Deep reinforcement learning, a prominent subfield of machine learning, provides a unique advantage in facilitating interactive and adaptive learning within complex and uncertain environments. This capability makes it a preferred approach among researchers aiming to tackle intricate problems that traditional algorithms struggle to address effectively. From the initial deployment of table storage to address discrete state and action spaces, such as SARSA \cite{zhao2016deep} and Q-learning \cite{watkins1992q}, these techniques can effectively address problems with reduced complexity. Subsequently, the concept of value approximation in neural networks led to the development of Deep Q-learning (DQN) \cite{hernandez2019understanding}, Double DQN \cite{van2016deep}, and Dueling DQN \cite{wang2016dueling}. These algorithms overcome the limitations of previous approaches, effectively enabling the handling of continuous state spaces. Dueling DQN, on the other hand, incorporates an advantageous function to assess the quality of an action within the dual network structure. In contrast to the aforementioned algorithms, which are based on iterative updating of Bellman value functions, DDPG \cite{lillicrap2015continuous}, SAC \cite{haarnoja2018soft}, TRPO \cite{lapan2018deep}, and PPO are based on the theory of gradient descent. Among these options, PPO stands out for its stability, usability, and efficiency. The introduction of a clipping loss function serves to enhance the stability of the training process, limiting the magnitude of each policy update and thereby avoiding the potential for drastic policy changes. Compared to TRPO, PPO simplifies the training process by avoiding complex constrained optimization calculations while keeping policy updates within a safe range. In addition to excelling in benchmark tasks, PPO proves effective in addressing complex real-world problems. However, one limitation of PPO is its high data requirement for training and its heavy reliance on historical data, which may lead to excessive dependence on prior experience if early-stage learning data is insufficient.

The combination of DRL and visual information aims to optimize the navigation efficiency and performance of RL by improving the feature extraction and fusion of information from the perception side. Some researchers have utilized RGB images, depth images, and area-segmented images to guide robots in optimizing navigation and obstacle avoidance strategies. Many works typically rely on comprehensive maps or collect accurate local maps, however, the high-speed flying nature of fixed-wing UAVs often makes it difficult to obtain comprehensive maps or gather accurate data in unknown environments where environmental information is impractical to acquire\cite{wu2021learn,martini2022position,lu2021mgrl, chai2022design}.
%consider the arrival target location or the target information in a manner that assumes its prior knowledge. 
Other scholars have enhanced the generalization capabilities of DRL by incorporating human knowledge as prior information, enabling navigation in novel environments with sparse rewards\cite{jiang2021temporal, wu2023human}.
While a substantial quantity of data can be augmented with human knowledge through supervised learning, the acquisition of strategic learning is frequently constrained by the methodologies employed for label generation. Conversely, the acquisition of human knowledge necessitates a considerable investment of effort.

shrink
{In summary, integrating deep reinforcement learning with visual information presents a powerful approach to solving robot navigation and obstacle avoidance challenges. However, existing DRL-based navigation and obstacle avoidance algorithms often suffer from an imbalance between learning and utilization, resulting in prolonged convergence times, low efficiency, and suboptimal obstacle avoidance performance. To address this, our approach incorporates a self-regulating entropy mechanism to enhance reinforcement learning performance. Combined with a backpropagation reward mechanism, this approach significantly improves navigation efficiency in unknown obstacle environments for fixed-wing UAVs.
}

\section{Methodology}
\subsection{Problem Formulation}
The monocular vision-based obstacle avoidance problem can be modeled as a Markov Decision Process (MDP), which is characterized by a tuple $\{\mathcal{S},\mathcal{A},\mathcal{P},r\}$. At time step $t$, the fixed-wing UAV collects environmental state variables $s_t$ using its camera. Based on the state ${s_t \in \mathcal{S}}$, the UAV selects an action \( a_t \) from the action space ${\mathcal{A}}$. The action ${a_t \in \mathcal{A}}$ interacts with the environment, generating a reward signal \( r_t \) and resulting in a transition to the next state \( s_{t+1} \). The objective of the algorithm is to find a policy that maximizes the cumulative reward $\sum_{t=0}^{\infty}\gamma^{t}\cdot r_{t}$ by selecting actions ${a_t}$ that yield the highest expected return at any given time step \( t \).

\subsubsection{State Space}
The state space encompasses the environmental data collected by the camera and the information regarding the target. This can be represented as
\begin{equation}
\mathcal{S} = \left\{\mathcal{S}^{\mathrm{env}}, \mathcal{S}^{\mathrm{tar}}\right\}
\end{equation}
where ${\mathcal{S}^{\mathrm{env}}}$ represents the environment captured by the camera, while ${\mathcal{S}^{\mathrm{tar}}}$ denotes the features related to the target.
${\mathcal{S}^{\mathrm{env}}}$ refers to a latent representation obtained from the encoder of an autoencoder network, designed to reduce redundant and adversarial information. In this context, the RGB image captured by the front-view monocamera $\mathbf{i}_{\mathrm{RGB}}$ is processed to extract depth information, as illustrated below
\begin{equation}
\mathcal{D}={\Gamma}_{\mathrm{depth}}(\mathbf{I_{\mathrm{RGB}}},\theta_{\mathrm{depth}}),
\end{equation}
where $\mathcal{D}\in\mathbb{R}^{H\times W}$ denotes a depth map with dimensions ${H}$ (height) and ${W}$ (width), ${\Gamma}_{\mathrm{depth}}$ is the depth estimation model with parameter $\theta_{\mathrm{depth}}$.

The latent representation is subsequently derived through convolutional encoding of the current generated depth map. This process, at a given time step \( t \), can be expressed as follows
\begin{equation}
\mathbf{f}_{t}={\Gamma}_{\mathrm{enc}}({\mathcal{D}_{t}},\theta_{\mathrm{e}}),
\end{equation}
where $\mathbf{f}_{t}\subseteq\mathbf{f}\in\mathbb{R}^{K}$ denotes the latent variable of size \( K \), while $\Gamma_{\mathrm{enc}}$ represents the encoding function parameterized by $\theta_{\mathrm{e}}$. Accordingly, ${\mathcal{S}^{\mathrm{env}}}$ is derived as follows
\begin{equation}
\mathcal{S}^{\mathrm{env}} = [\mathbf{f}],
\end{equation}
${\mathcal{S}^{\mathrm{target}}}$ represents a local goal, which can be expressed as follows
\begin{equation}
\mathcal{S}^{\mathrm{target}}=[d,\alpha],
\end{equation}
where \( d \) and $\alpha$ represent the normalized relative distance and angle to the goal position, respectively. In this context, we consider a 2-dimensional coordinate system, where \( d \) is computed as follows
\begin{equation}
d=\frac{\|\bm{p}_\mathrm{target},\bm{p}_\mathrm{ego}\|_2}{d_\mathrm{max}},
\end{equation}
where ${{\bm{p}}_{{\rm{ego}}}}$ and ${{\bm{p}}_{{\rm{target}}}}$ represent the coordinate vectors of the current drone position and the target position, respectively. ${{\left\| \cdot \right\|}_{2}}$ denotes the L2 norm, $d_{\mathrm{max}}$ represents the maximum allowable distance between the UAV and the target. $\alpha$ is in radians and is calculated by
\begin{equation}
\alpha=\arctan\left(\frac{\bm{p}_{\text{target},y}-\bm{p}_{\text{ego},y}}{\bm{p}_{\text{target},x}-\bm{p}_{\text{ego},x}}\right)/\pi, 
\end{equation}
where $x$ and $y$ correspond to the longitudinal and lateral axes of the coordinate system, respectively.

\subsubsection{Action Space}
To adapt to the flight characteristics of fixed-wing UAVs, the action space is composed of waypoints in various directions within the body-fixed coordinate system under a constant altitude system, as well as the continuation of the action from the previous time step. This can be formulated as
\begin{equation}
\mathcal{A} = \left\{
\begin{array}{ll}
{{\bm{w}}_{t - 1}} & \text{if continue last action} \\
{{\bm{w}}_{t}} & \text{otherwise}
\end{array},
\right.
\end{equation}
where ${\bm{w}}_{t}$ represents the choosing waypoint and can be calculated as
\begin{equation}
(x_{t}^{body},y_{t}^{body}) \buildrel \Delta \over = {\lambda} \cdot (\cos ({\Delta _{{\rm{yaw}}}}),\sin ({\Delta _{{\rm{yaw}}}})),
\end{equation}
where $\Delta _{\rm{yaw}}\in\left\{ 0, \pm \frac{\pi }{6}, \pm \frac{\pi }{4}, \pm \frac{\pi }{3}\right\}$ represents the discrete desired change in yaw angle magnitude and $\lambda$ represents the Euclidean distance between the calculated waypoint and the current position. 
\subsubsection{Reward Function}
The design of the reward function remains one of the most significant challenges in DRL algorithms. A primary limitation of RL is that reward functions are typically hand-crafted and tailored to specific domains. There has been quite a bit of research in Inverse Reinforcement Learning (IRL), and most of the work provides a way to automatically obtain cost functions from expert demonstrations. However, these approaches are often computationally intensive, and the optimization required to identify a reward function that accurately represents expert trajectories is inherently complex. 
This paper focuses on designing a denser reward function to enhance the obstacle avoidance strategy, aiming not only to achieve high success rates in avoiding obstacles but also to enable smoother paths.
In the process of obstacle avoidance, this paper introduces a reward function that incorporates an inference mechanism to ensure robust learning under conditions of general applicability and rapid convergence.

% First, upon the drone reaching its designated target, it will immediately receive a reward $r_{\mathrm{target}}$ as
When the drone reaches its designated target, it immediately receives a reward $r_{\mathrm{target}}$ defined as
\begin{equation}
r_{\mathrm{target}}=\left\{\begin{matrix}
C_1 & \text{if reaches target} \\
0 & \text{otherwise}
\end{matrix}\right..
\end{equation}

% \begin{figure*}[htbp]
%     \centering
%     \includegraphics[width=\linewidth]{Images/wang_mo.jpg}
%     \caption{The proposed obstacle avoidance framework for fixed-wing UAVs. A depth map is generated from a monocular RGB image using the method described in \cite{bhat2023zoedepth}, which is encoded by a lightweight backbone \cite{ma2024rewrite} to extract visual features. These visual features are concatenated with target features and input into the policy network to generate actions, while the critic network evaluates state values. An adaptive entropy module dynamically adjusts the exploration-exploitation tradeoff during training, and an inverse reward function updates the replay buffer, facilitating continuous policy optimization.}
%     \label{fig:framework}
% \end{figure*}
\begin{figure*}[htbp]
    \centering
    \includegraphics[width=\linewidth]{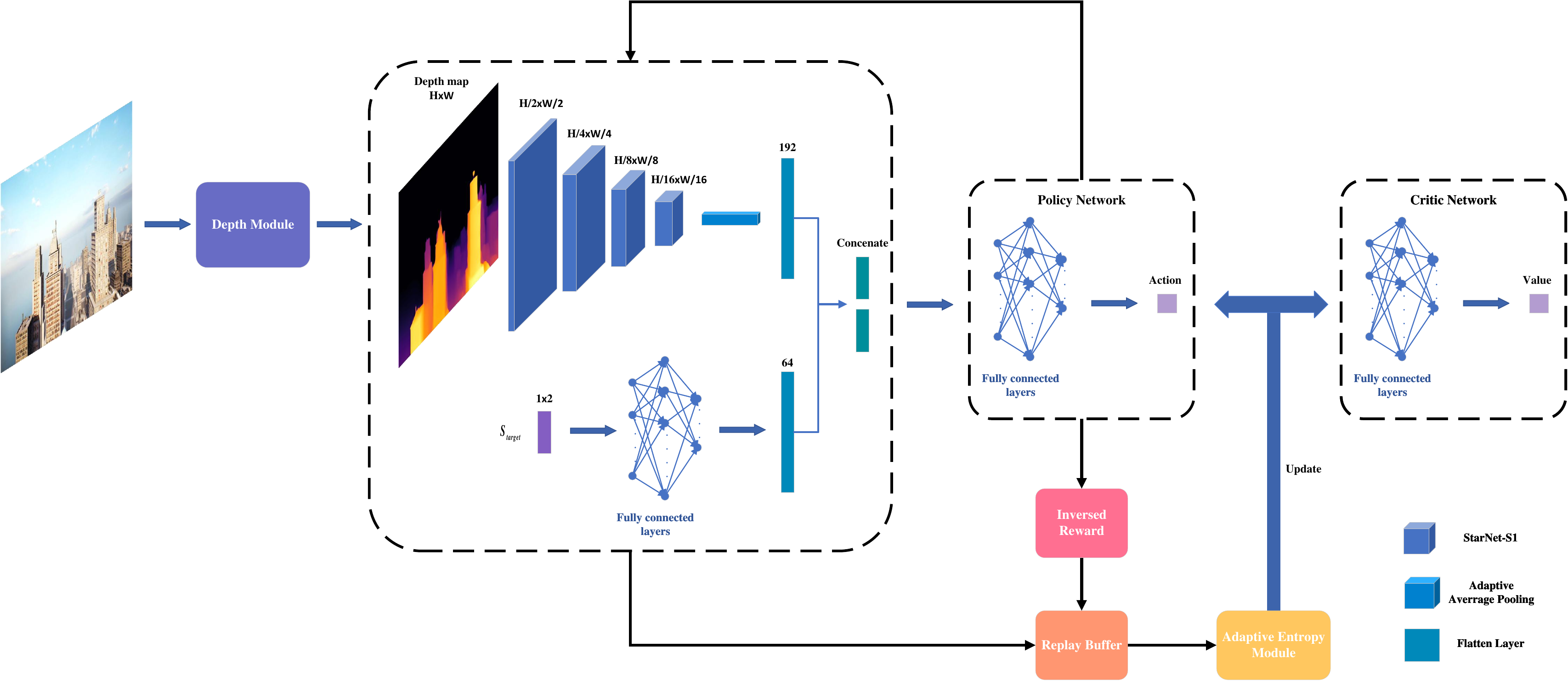}
    \caption{The proposed obstacle avoidance framework for fixed-wing UAVs. A depth map is generated from a monocular RGB image using the method described in \cite{bhat2023zoedepth}, which is encoded by a lightweight backbone \cite{ma2024rewrite} to extract visual features. These visual features are concatenated with target features and input into the policy network to generate actions, while the critic network evaluates state values. An adaptive entropy module dynamically adjusts the exploration-exploitation tradeoff during training, and an inverse reward function updates the replay buffer, facilitating continuous policy optimization.}
    \label{fig:framework}
\end{figure*}
% Second, when the drone experiences a collision, it incurs a negative reward $r_{\mathrm{collision}}$ as a penalty as
If the drone experiences a collision, it incurs a negative reward $r_{\mathrm{collision}}$ as a penalty defined as
\begin{equation}
r_{\mathrm{collision}}=\left\{\begin{matrix}
C_2 & \text{if collision happens} \\
0 & \text{otherwise}
\end{matrix}\right..
\end{equation}

% Third, the drone should approach the target as quickly as possible, so it is necessary to encourage the drone to be closer to the target at time \( t \) than at time \( t-1 \). The corresponding reward $r_{\rm{dis}}$ is defined as
The drone should approach the target as quickly as possible, making it necessary to encourage the drone to be closer to the target at time  \( t \) than at time \( t-1 \). The corresponding reward $r_{\rm{dis}}$ is defined as
\begin{equation}
{r_{\rm{dis}}} = \Delta d \cdot {C_3},
\end{equation}
\begin{equation}
\Delta d = {d_{t - 1}} - {d_t},
\end{equation}
where ${d_t}$ represents the relative distance between the drone and the target point at time ${t}$.

Finally, we aim for the drone to reach its destination via the shortest path. Therefore, we designed a reward function $r_{\rm{track}}$ that encourages the drone to follow the planned trajectory while learning to interpret depth information to avoid obstacles. The corresponding reward $r_{\rm{track}}$ is defined as
\begin{equation}
{r_{\rm{track}}} = \delta  \cdot {C_4},
\end{equation}
\begin{equation}
\delta  = \frac{{{{\bm{p}}_{{\rm{target}}}} \cdot {{\bm{p}}_{{\rm{ego}}}}}}{{{{\left\| {{{\bm{p}}_{\rm{target}}}} \right\|}_{2}} \cdot {{\left\| {{{\bm{p}}_{\rm{ego}}}} \right\|}_{2}}}},
\end{equation}
where $\forall i \in {\left\{1, \dots, 4\right\}}, {C_i}$ represents the weight of each reward module.

The overall reward function is constructed by combining the four aforementioned sub-terms as follows,
\begin{equation}
\label{eq:reward_function}
{r} = r_{\mathrm{target}}+ r_{\mathrm{collision}}+r_{\rm{dis}}+r_{\rm{track}}.
\end{equation}

\section{Inverse PPO Based on Adaptive Entropy}
% The PPO algorithm has been favored by scholars for its stability. 
In this section, we design a novel inverse PPO-based lightweight model to solve the fixed-wing UAV obstacle avoidance problem. The framework contains a lightweight backbone, an efficient strategy selection mechanism, and a new optimization objective function.

\subsection{Overview}
The overall framework is illustrated in Fig. \ref{fig:framework}. Our method is motivated by previous studies that examined the encoding of depth maps from multiple consecutive frames as state variables in reinforcement learning models for navigation and obstacle avoidance tasks. However, the storage and encoding of multiple frames of depth maps lead to high memory consumption and degrade the real-time performance of the system, rendering this approach unsuitable for fast-moving fixed-wing UAVs equipped with edge computing devices.

To address the challenge of increased memory consumption caused by stacking multiple depth maps, and to alleviate potential generalization issues that arise in depth inference, we incorporated a fine-tuned monocular depth estimation model proposed by \cite{bhat2023zoedepth}, which is proven to be reliable across a wide range of environments.
By fine-tuning this depth model for our specific application, we are able to
generate reliable enough depth maps for the following deep reinforcement learning module and at the same time 
reduce the computational burden of processing multiple depth frames.

Additionally, one of our primary objectives was to ensure that the proposed architecture sustained computational efficiency, particularly when deployed on edge devices with limited processing capabilities. To this end, we integrated \cite{ma2024rewrite}, a model specifically designed for efficient feature extraction, as part of our system architecture. Specifically, it improves feature extraction by performing element-wise multiplication between two linear transformation features, an operation inherently optimized for execution on Neural Processing Unit (NPU) architectures. NPUs are specifically optimized for matrix operations and parallel processing\cite{tan2022deep}, making them particularly suitable for operations involving intensive linear algebra computations. By leveraging the compatibility between the feature fusion mechanism and NPU hardware, we achieved both high performance and low power consumption, which are essential for edge computing environments. 

\subsection{Inferring Advantage Function}
The loss function of the traditional PPO algorithm is mainly based on the advantage function. To improve the universality of the algorithm, an inferring advantage function based on the reward function in 
Eq. \ref{eq:reward_function} is designed to make the algorithm a closed loop. 
\begin{equation}
\begin{array}{l}
{InA_\theta }\left( {{s_t},{a_t}} \right) \buildrel \Delta \over = {InQ_\theta }\left( {{s_t},{a_t}} \right) - {InV_\theta }\left( {{s_t}} \right)\\
 = {\mathbb E_{{s_t},{a_t}}}\left( {\sum\limits_l {{r_{t + l}}} } \right) - {\mathbb E_{{s_t}}}\left( {\sum\limits_l {{r_{t + l}}} } \right),
\end{array}
\end{equation}
where $\theta$ is the vector of policy parameters before the update, $InQ_\theta \left( {{s_t},{a_t}} \right)$ and ${InV_\theta }\left( {{s_t}} \right)$ are inferring action-value function and inferring value function, which can be obtained with reward function in (16). Therefore, ${InA_\theta }\left( {{s_t},{a_t}} \right)$ ia a inferring advantage function at timestep $t$.

In PPO algorithm, the importance sampling mechanism is used to control the updating range of the policy while optimizing the policy, so as to avoid the problem of drastic changes caused by excessive updating. However, during the training process, we usually use some old strategies that have been trained to collect samples, rather than using the latest strategies that are currently available. This leads to the problem of mismatch between the sample and the current policy, which is called "policy offset". In order to ensure the exploration of better solutions when the distribution gap between the two data is large, we no longer assume that the distribution of the old and new strategies is similar, and encourage the exploration of new strategies when the distribution difference between the old and new strategies is large. Therefore, this paper explores and utilizes data considering the distribution of old and new strategies
\begin{equation}
\begin{split}
\begin{array}{l}
{\mathbb E_{({s_t},{a_t}) \sim {\pi _\theta } }}\left[ {{InA_\theta }\left( {{s_t},{a_t}} \right)\nabla \log {\pi_\theta }\left( {a_t^n\left| {s_t ^ n} \right.} \right)} \right]\\
 = {\mathbb E_{({s_t},{a_t}) \sim {\pi_{\theta '}}}} \Bigg[ {{\frac{{{\pi_\theta }({a_t}\left| {{s_t}} \right.)}}{{{\pi_{\theta '}}({a_t}\left| {{s_t}} \right.)}}\frac{{{\pi_\theta }({s_t})}}{{{\pi_{\theta '}}({s_t})}} }} {InA_\theta }\left( {{s_t},{a_t}} \right)\\
\quad \quad \quad \quad \quad \quad \quad {{ \nabla \log {\pi_\theta }\left( {a_t^n\left| {s_t^n} \right.} \right)}} \Bigg],
\end{array}
\end{split}  
\end{equation}
where $\theta ' $ represents the vector of policy parameters after the update, ${\pi _\theta }$ and ${\pi _{\theta'}}$ are old and new strategies, respectively.
% \sim {\pi _\theta } \sim {\pi _{\theta '}}

\subsection{Strategy Selection Mechanism}
In this section, we undertake a detailed examination of the factors that must be taken into account when selecting strategy mechanisms from two distinct perspectives.
\subsubsection{Balance Exploration and Exploitation}
The challenge of reinforcement learning lies in striking a balance between exploration and exploitation. An excess of exploration can result in situations where the algorithm fails to converge or converges slowly, whereas an excess of exploitation can lead to the disadvantage of local optimality. In the traditional PPO algorithm framework, it uses an importance sampling mechanism to train the model. More importantly, its assumed that the distributions of the training and learning models are consistent. However, the approach can lead to over-dependence of the data of the learning model on the merits of the training data, when the data trained by the intelligences are not picked for good strategies, making the learning success rate decrease. To solve the problem, we design a new strategy selection mechanism.

\subsubsection{Lowering Sensitivity to Prior Knowledge}
When viewed through the lens of prior knowledge, the efficacy of conventional PPO algorithms, along with other deep reinforcement learning techniques, is markedly influenced by the data accumulated in previous iterations. In an effort to mitigate this reliance on prior knowledge and drawing inspiration from maximum entropy methods, we have devised policy mechanisms that are not only robust and stable but also exhibit rapid convergence through the use of self-tuning.

\begin{definition}
The strategy entropy in the Markov process affects the balance between exploration and utilization, where for each state and action, the constraint is given as follows.
\begin{equation}
 H(a\left| s \right.) = {e^{\frac{{\sum\limits_t {{\gamma ^t}{r_t}} }}{r_{e}}}}\left( { - \sum\limits_a {\pi_\theta (s,a)} \log \pi (s,a)} \right),   
\end{equation}
\end{definition}
where $r_{e}$ denotes an expected value of reward. 

When the aforementioned entropy $\mathcal H $ is higher, there is a greater propensity for utilization. Conversely, when entropy is lower, there is a greater propensity for exploration.

\begin{remark}
The previously mentioned strategy of entropy allows us to effectively address the challenge of balancing exploration and exploitation. However, in light of the necessity for simplified implementation and reduced computational complexity in engineering, there is a clear need for the development of more sophisticated entropy operators.
\end{remark}

To make it learn under conditions that increase its success rate, we design a more generalized strategy entropy mechanism.
\begin{equation}
 H(a\left| s \right.) = \frac{{\sum\limits_t {{M_s}} }}{{Batch}}\left( { - \sum\limits_a {\pi_\theta (s,a)} \log \pi_\theta (s,a)} \right),
\end{equation}
where $M_s$ is the total number of successes in a ${Batch}$, ${Batch}$ represents a set of data samples used when updating a policy or value function.

\begin{lemma}{\label{lemma1}}
$ H(\pi (s,a))$ is $\eta$-smooth, equipped with the Taylor’s theorem, we have such that
\begin{equation}
\begin{array}{l}
{\left\| {\nabla  H(\pi_\theta (s,a)) - \nabla  H(\pi_\theta (s',a))} \right\|_\infty }\\
\le \eta {\left\| {\pi_\theta (s,a) - \pi_\theta (s',a)} \right\|_\infty },
\end{array}
\end{equation}
where $\eta$ is a coefficient.
\end{lemma}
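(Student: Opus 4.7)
The plan is to view $H$ as a function of the probability vector $\pi_\theta(s,\cdot)$ on the discrete action simplex and reduce the claim to the classical second-order smoothness bound for the Shannon entropy. Writing $c := \frac{\sum_t M_s}{Batch}$ for the nonnegative scalar prefactor, which does not depend on $\pi$, and letting $h(\pi) = -\sum_a \pi(a)\log \pi(a)$ denote the Shannon entropy, we have $H = c\,h$, so it suffices to establish smoothness of $h$ on the relevant domain and absorb $c$ into the final constant.

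First I would differentiate componentwise: $\partial h/\partial \pi(a) = -\log \pi(a) - 1$, and the Hessian is the diagonal matrix with entries $-1/\pi(a)$. Hence $\|\nabla^2 h(\pi)\|_\infty = \max_a 1/\pi(a)$. With this in hand I would invoke Taylor's theorem with integral remainder along the segment joining $\pi_\theta(s,\cdot)$ to $\pi_\theta(s',\cdot)$, giving
\begin{equation*}
\nabla H(\pi_\theta(s,a)) - \nabla H(\pi_\theta(s',a)) = \int_0^1 \nabla^2 H\!\left(\pi_\theta(s',a) + t\bigl(\pi_\theta(s,a) - \pi_\theta(s',a)\bigr)\right)\bigl(\pi_\theta(s,a) - \pi_\theta(s',a)\bigr)\,dt.
\end{equation*}
Taking $\|\cdot\|_\infty$ on both sides, moving the norm inside the integral, and invoking a uniform bound on $\|\nabla^2 H\|_\infty$ along the segment immediately yields the stated Lipschitz-gradient inequality with $\eta$ equal to that uniform bound.

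The main obstacle, and therefore the step I would be most careful with, is producing a finite Hessian bound. Because $1/\pi(a) \to \infty$ as $\pi(a) \to 0$, the Shannon entropy is not globally smooth on the open simplex. I would therefore restrict attention to policies produced by the softmax head of the actor network, whose outputs are bounded below by a constant $\pi_{\min} > 0$ whenever the pre-softmax logits remain in a bounded region, a mild condition maintained in practice by weight regularization and by the PPO clipping step. On this admissible set one can take $\eta = c/\pi_{\min}$, at which point the Taylor argument above closes the proof. Making this lower bound explicit, or equivalently inheriting it from a bounded-logit assumption on the policy parameterization, is the subtlety I would want to state precisely before declaring the argument complete.
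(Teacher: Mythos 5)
Your argument is essentially correct, but it is worth noting that the paper does not actually prove this lemma at all: the inequality displayed in the statement is just the definition of $\eta$-smoothness (Lipschitz continuity of the gradient in the $\ell_\infty$ norm), and the paper asserts it without justification before invoking it in the proof of the subsequent theorem. Your proposal therefore supplies an argument where the paper supplies none. The route you take --- factoring out the scalar prefactor $c=\sum_t M_s/Batch$, computing the diagonal Hessian of the Shannon entropy with entries $-1/\pi(a)$, and applying Taylor's theorem with integral remainder along the segment between the two probability vectors --- is the standard and correct way to establish such a bound. More importantly, you correctly identify the genuine mathematical obstruction that the paper silently ignores: since $\max_a 1/\pi(a)$ is unbounded as any action probability approaches zero, the entropy is \emph{not} globally smooth on the simplex, so the lemma as stated is false without an additional hypothesis. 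Your fix --- restricting to softmax policies with bounded logits so that $\pi(a)\ge\pi_{\min}>0$ and taking $\eta=c/\pi_{\min}$ --- is the right one, and making that lower bound an explicit assumption is exactly what would be needed to make the lemma rigorous. The only residual caveat is that $M_s$ is itself a function of the trajectories generated by the policy, so treating $c$ as a $\pi$-independent constant is only valid within a fixed batch; this matches the paper's implicit usage, but it deserves a sentence if you want the statement to be airtight.
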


\begin{theorem}
For any $ k < N, k \in \mathbb{N}$, the entropy can be used as the sample mean as follows,
\begin{equation}
\begin{split}
\begin{array}{l}
 H(\pi_\theta (s,a)) = \frac{1}{T}{\sum\limits_{i = 0}^{T - 1} {\Bigg[ {\left( {T - 1} \right)V^m{C_k}}}}\\
\quad \quad \quad \quad \quad \quad \quad \quad \quad {{{{{{\left\| {\pi_\theta (s,a) - \pi_\theta (s',a)} \right\|}^m}} }\Bigg]} ^\lambda },
\end{array}
\end{split}
\end{equation}
where ${C_k} = {\left[ {\frac{{k!}}{{\left( {k + 1 - \left[ \lambda  \right]} \right)!}}} \right]^\lambda }$ and ${V^m} = {\raise0.7ex\hbox{${{\pi ^{\frac{m}{2}}}}$} \!\mathord{\left/
 {\vphantom {{{\pi ^{\frac{m}{2}}}} {(\frac{m}{2} + 1)!}}}\right.\kern-\nulldelimiterspace}
\!\lower0.7ex\hbox{${(\frac{m}{2} + 1)!}$}}$ is the volume of the unit ball $\mathcal B(0, 1)$ in $\mathbb R^m$. And it holds that
\begin{equation}
\mathop {\lim }\limits_{N \to \infty } H_N^k(\pi_\theta (s,a)) = H(\pi_\theta ^{\mathcal L} (s,a)),
\end{equation}
where $\pi ^{\mathcal L} (s,a)$ denotes the Lebesgue measure. 
\end{theorem}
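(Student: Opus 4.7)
The plan is to interpret the right-hand side of the theorem as a Kozachenko--Leonenko / Leonenko--Pronzato--Savani style $k$-nearest-neighbour functional and to establish the two claims in three stages: a local linearisation of $H$, a combinatorial identification of the normalising constants $V^{m}C_{k}$, and a sample-mean limit that recovers the Lebesgue-measure entropy $H(\pi_{\theta}^{\mathcal L}(s,a))$.

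First I would use Lemma~\ref{lemma1} to linearise $H(\pi_\theta(s,a))$ around neighbouring policy values $\pi_\theta(s',a)$. Taylor's theorem together with the $\eta$-smoothness bound
\[
\|\nabla H(\pi_\theta(s,a))-\nabla H(\pi_\theta(s',a))\|_\infty\le \eta\|\pi_\theta(s,a)-\pi_\theta(s',a)\|_\infty
\]
yields, after iterating $m$ times, a remainder term of order $\|\pi_\theta(s,a)-\pi_\theta(s',a)\|^{m}$, which is precisely the factor appearing inside the bracket of the claimed expression. This local expansion rewrites $H$ as a sum of $T$ contributions indexed by $i=0,\dots,T-1$ and justifies the outer sample average $1/T$.

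Next I would pin down the normalising constants. The quantity $V^{m}=\pi^{m/2}/(m/2+1)!$ is the Lebesgue volume of the unit ball in $\mathbb R^{m}$, so $V^{m}\|\pi_\theta(s,a)-\pi_\theta(s',a)\|^{m}$ is the volume of a ball enclosing the local policy variation. The combinatorial factor $C_{k}=[k!/(k+1-[\lambda])!]^{\lambda}$ is the standard bias correction that arises from the moments of the Erlang-distributed $k$-th nearest-neighbour radius; it is obtained by a direct gamma-function calculation using the identity $\int_{0}^{\infty}r^{\,m(1-\lambda)}f_{k}(r)\,dr=C_{k}$, where $f_{k}$ is the order-statistic density for a uniform sample on the ball. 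Multiplying by $(T-1)$ accounts for the conditioning on the reference sample, and raising to the power $\lambda$ produces $H_{N}^{k}(\pi_\theta(s,a))$ in the stated form.

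Finally, the limit $\lim_{N\to\infty}H_{N}^{k}(\pi_\theta(s,a))=H(\pi_{\theta}^{\mathcal L}(s,a))$ would follow by combining a strong-law-of-large-numbers argument on the outer sample mean with a dominated-convergence step on the inner bracket: Lemma~\ref{lemma1} dominates the Taylor remainder, while the volume factor $V^{m}C_{k}\|\cdot\|^{m}$ converges in $L^{1}$ to the local density of $\pi_{\theta}$ against $\mathcal L$, driving the empirical average to $-\int\pi_{\theta}(s,a)\log\pi_{\theta}(s,a)\,d\mathcal L$. The hard part will be controlling the $k$-th nearest-neighbour radii uniformly in $N$ so that the bias term vanishes at the correct rate; this usually requires either a lower bound on $\pi_\theta$ over its support or a suitable tail/moment condition on the policy, and is the step where I expect the argument to demand the most care.
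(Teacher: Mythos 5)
Your proposal reads the right-hand side as a Leonenko--Pronzato--Savani style $k$-nearest-neighbour entropy functional and plans a consistency argument (law of large numbers on the outer average plus bias control on the $k$-th neighbour radii). That is a genuinely different route from the paper's proof, which never engages with the sample-mean identity at all: the paper sets $\pi^{*}=\arg\max_{\pi\in\Pi}\mathcal H(\pi_\theta(s))$, invokes Lemma~\ref{lemma1} to assert the optimality-gap bound $H(\pi_\theta^{*}(s,a))-H(\pi_\theta(s,a))\le\kappa\exp(-T\eta)+2\beta\sigma+\zeta$, notes finiteness of $H$ under bounded support, bounds $\|\nabla H(\pi_\theta(s,a))\|_\infty$ by a constant $\kappa$ tied to the discounted return, and concludes that $H(\pi_\theta(s,a))$ ``tends to the support'' of $\pi_\theta^{\mathcal L}(s,a)$. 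Neither the constants $V^{m}$, $C_{k}$ nor the exponent $\lambda$ appear anywhere in that argument, so your reading of where the formula actually comes from is, if anything, closer to the structure of the stated identity than the paper's own proof is.

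That said, your plan has concrete gaps that would stop it from going through as written. First, Lemma~\ref{lemma1} only gives Lipschitz continuity of $\nabla H$, which controls a \emph{second}-order Taylor remainder; iterating a first-order smoothness bound cannot manufacture a remainder of order $\|\pi_\theta(s,a)-\pi_\theta(s',a)\|^{m}$ for general $m$ --- that would require $m$-times differentiability with controlled higher derivatives, which is not assumed. Second, the Erlang/order-statistic computation you propose for $C_{k}$ presupposes that the quantity inside the norm is a distance between i.i.d.\ sample points in $\mathbb R^{m}$ drawn from the density being estimated; in the theorem it is a difference of policy values at two states $s$ and $s'$, and no sampling mechanism for $s'$ (nor any identification of $T$ or $N$ with a sample size) is specified, so the integral identity defining $C_{k}$ has no random variable to apply to. Third, the limit $H_{N}^{k}\to H(\pi_\theta^{\mathcal L})$ is precisely the step you defer, and it is the entire content of the second claim; without the uniform control of the neighbour radii (or a density lower bound) the argument establishes neither the equality nor the limit. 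So the proposal is a plausible reconstruction of the formula's provenance, but it is a proof outline with its hardest steps missing, and it does not coincide with the paper's (itself largely assertional) argument.
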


\begin{proof}
Equipped with Lemma \ref{lemma1}, let we have ${\pi ^*} = \arg \mathop {\max }\limits_{\pi  \in \Pi } \mathcal H(\pi_\theta (s))$
\begin{equation}
\begin{array}{l}
 H(\pi_\theta ^* (s,a)) -  H(\pi_\theta (s,a))\\
 \le \kappa \exp ( - T\eta ) + 2\beta \sigma  + \zeta ,
\end{array}
\end{equation}
Note that $ \mathcal H(s)$ is finite if $\pi_\theta (s)$ is of bounded support. Indeed, consider the imposed smoothing on $ \mathcal H(\pi_\theta (s,a))$,we have 
\begin{equation}
\begin{array}{l}
 H(\pi_\theta (s,a)) \ge \mathop {\max }\limits_{\pi_\theta  \in \Pi } H(\pi_\theta (s,a)) - \sigma,
\end{array}
\end{equation}
and
\begin{equation}
\begin{array}{l}
{\left\| {\nabla H(\pi_\theta (s,a))} \right\|_\infty } \le \kappa  = {e^{\frac{{\sum\limits_t {{\gamma ^t}{r_t}} }}{{Exp(R)}}}},
\end{array}
\end{equation}
where $T \ge 10\zeta \kappa \log 10\zeta $. 

Hence, $ H(\pi_\theta (s,a))$ is tend to the support of $\pi_\theta ^{\mathcal L} (s,a)$. This concludes the proof.
\hfill $\square$
% $\hfill\blacksquare$
\end{proof}

\subsubsection{Learning Objective}
In the traditional PPO implementations, the process of training is influenced by a fixed hyperparameter which determines the exploration magnitude. In this paper, a new PPO method with an inferring reward mechanism and adpative entropy is introduced, which incorporates a dynamic scaling of the entropy coefficient based on the recent return obtained by the agent. 
Based on the above discussion, the final loss function can be written in the following form,

\begin{small}
\begin{equation}
\begin{split}
L^{CLIP}_{\theta} = \mathbb{E}_{({s_t},{a_t}) \sim {\pi _\theta ' } } \Bigg[ \min \left( 
    {{\frac{{{\pi_\theta }({a_t}\left| {{s_t}} \right.)}}{{{\pi_{\theta '}}({a_t}\left| {{s_t}} \right.)}}\frac{{{\pi_\theta }({s_t})}}{{{\pi_{\theta '}}({s_t})}} }} InA_\theta (s_t, a_t), \right. \\
    \left. clip \left( \frac{\pi_\theta (a_t \mid s_t)}{\pi_{\theta '}(a_t \mid s_t)}, 1 - \varepsilon, 1 + \varepsilon \right) InA_\theta (s_t, a_t) 
    \right) \Bigg],
\end{split}
\end{equation}
\end{small}

\begin{equation}
\begin{split}
{L^{VF}_{\theta}} = \left( InV_{\theta}(s_t) - InV_t^\text{target} \right)^2,
\end{split}
\end{equation}

\begin{equation}
\begin{split}
{L^{ENT}_{\theta}} = \mathbb{E}_{({s_t},{a_t}) \sim {\pi _\theta } } \left[ \mathcal{H}(\pi_\theta(a_t|s_t)) \right],
\end{split}
\end{equation}

\begin{equation}
\begin{split}
{L^{Inverse}} = L^{CLIP} - w_1 L^{VF} + w_2 L^{ENT},
\end{split}
\end{equation}
where $w_1$ and $w_2$ are hyperparameters that are used to regulate the effects of value loss and entropy.

\section{Experimental Validation}
%The proposed method utilizes prior knowledge to enhance the performance of reinforcement learning in fixed-wing UAV obstacle avoidance tasks. 
To evaluate its effectiveness, three experimental setups are designed in this section. {  
First, we design an ablation experiment to separately demonstrate the impact of the designed reward function and the update mechanism. Second, we demonstrate the superiority of the proposed method through comparisons with other deep reinforcement learning algorithms. Finally, a hardware-in-the-loop simulation experiment is conducted to verify the deployment capability of the proposed framework on edge devices, while also comparing it with classic sample-based methods.}
\begin{table}[htbp]
\centering
\caption{Common Parameter Settings for PPO and Proposed Algorithm}
\begin{tabular}{cc}
\hline
\textbf{Parameter}        & \textbf{Value} \\ \hline
Air Speed (m/s)           & 30         \\ 
Depth Map Size ($\mathcal{}{H},W$)            & 224, 224         \\ 
Reward Term Weight (${C_1}, C_2, C_3, C_4$)    & 30, -30, 0.5, 1.0 \\ 
Flying Distance Cap $d_{max}$ (m)              & 1300 \\
Learning Rate             & 0.0003         \\ 
Gamma (\(\gamma\))        & 0.95           \\ 
Clip Range (\(\epsilon\)) & 0.3            \\ \hline
K Epochs                  & 2              \\ 
Batch Size                & 2048             \\ 
Value Loss Coefficient(${w_1}$)    & 0.5            \\ 
Entropy Loss Coefficient(${w_2}$)       & 0.1           \\ 
Max Timesteps Per Episode             & 60            \\ 
Max episodes             & 3000            \\ 
State Dimension           & 256            \\ 
Action Dimension          & 8              \\ \hline
\end{tabular}
\label{tab:PPO_Settings}
\end{table}
\begin{figure}[htp]
    \centering
    \includegraphics[width=1.0\linewidth]{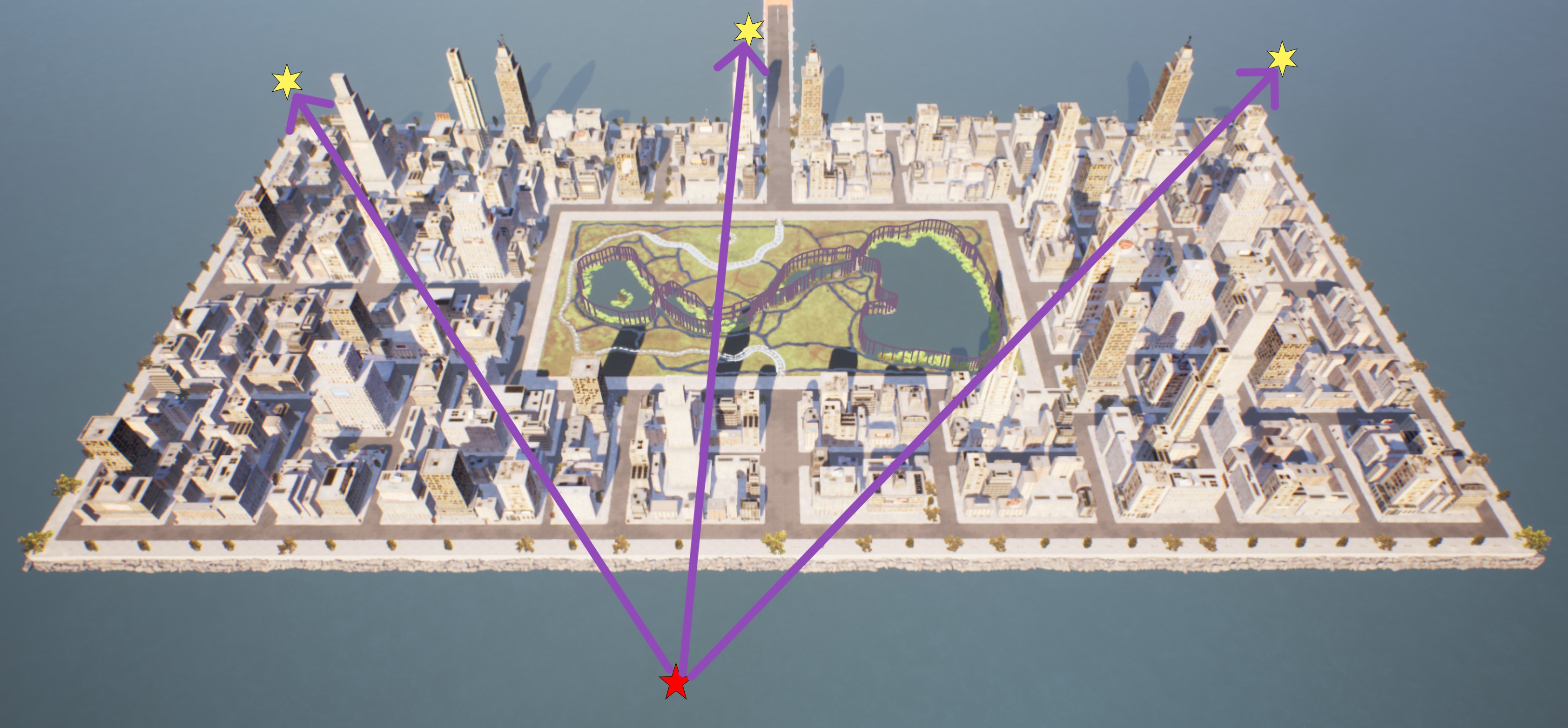}
    \caption{Training flight paths. The yellow six-pointed stars represent the targets, the red star indicates the fixed-wing UAV's take-off position, and the purple line represents the expected flight trajectory.}
    \label{fig:train_lines}
\end{figure}
\begin{figure*}[t]
    \centering
    \begin{minipage}[b]{0.45\textwidth}
        \centering
        \includegraphics[width=\textwidth]{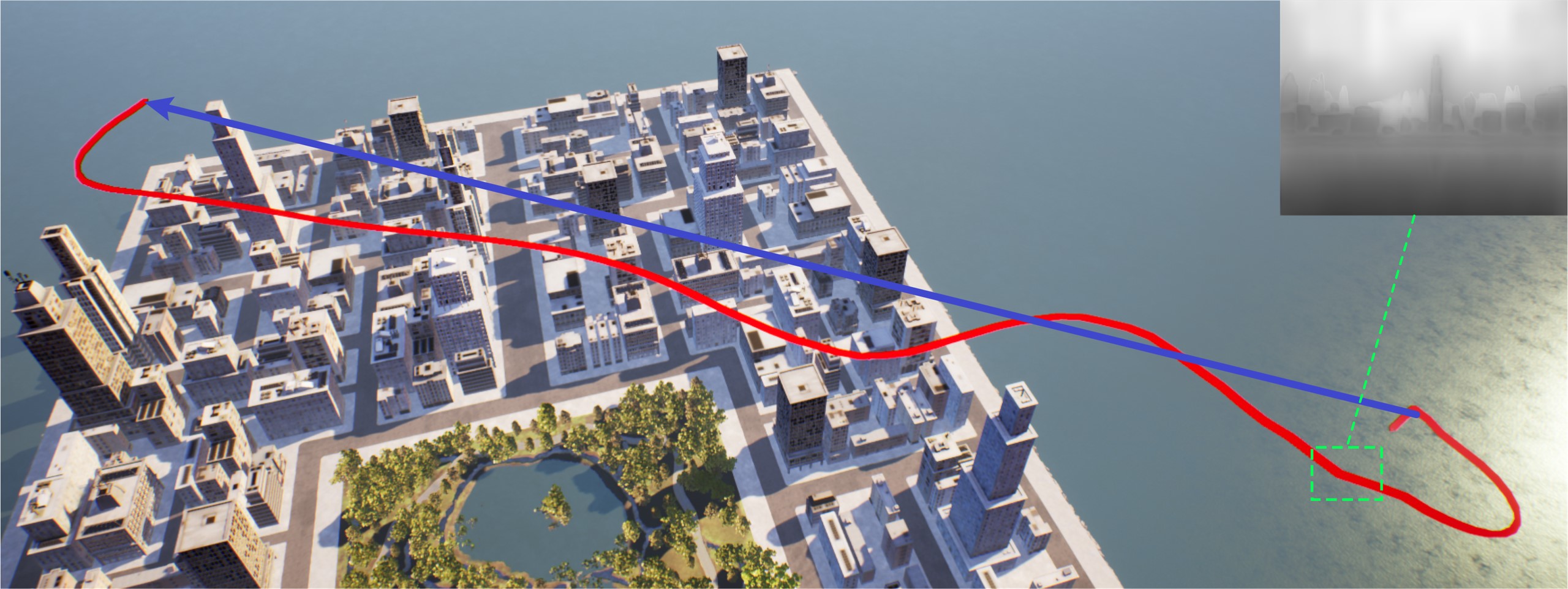}
        (a)
    \end{minipage}
    \hfill
    \begin{minipage}[b]{0.45\textwidth}
        \centering
        \includegraphics[width=\textwidth]{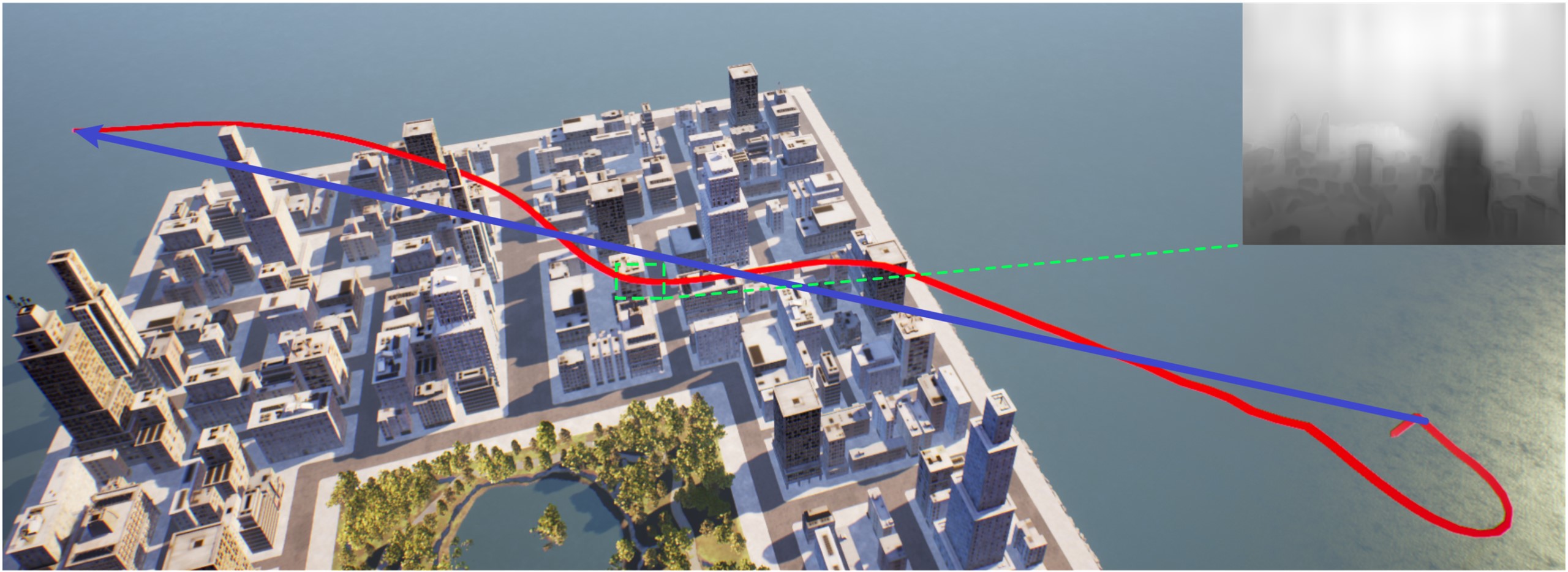}
        (b)
    \end{minipage}

    \vspace{0.5cm} % Add vertical space between rows

    \begin{minipage}[b]{0.45\textwidth}
        \centering
        \includegraphics[width=\textwidth]{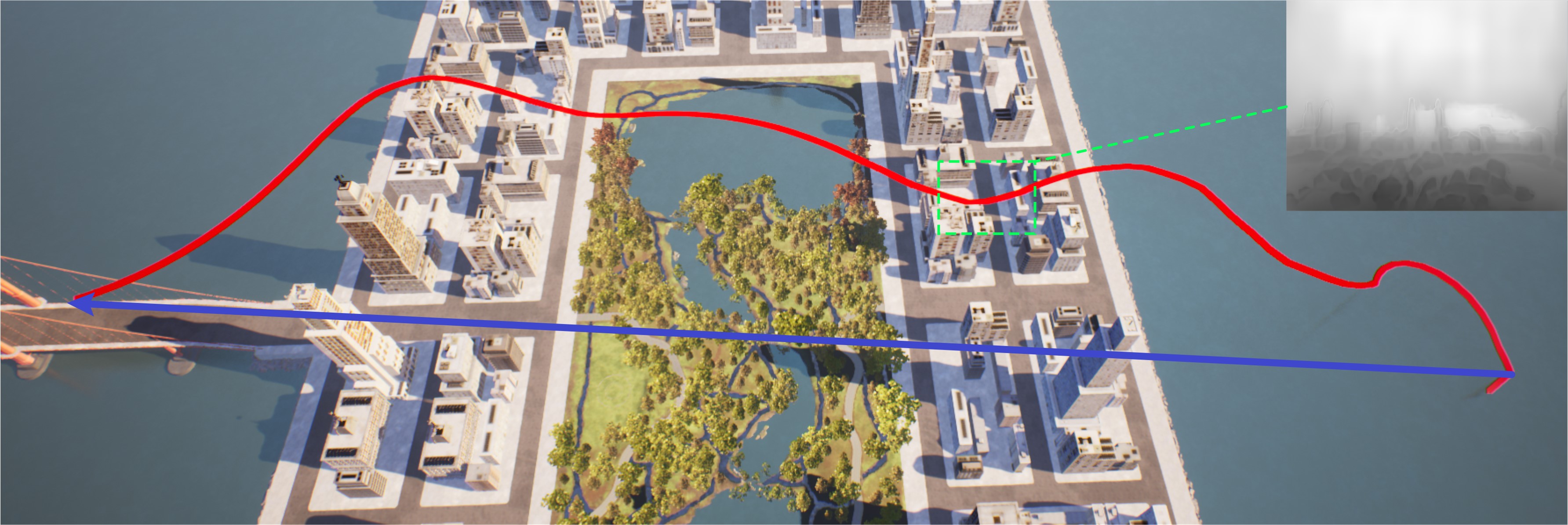}
        (c)
    \end{minipage}
    \hfill
    \begin{minipage}[b]{0.45\textwidth}
        \centering
        \includegraphics[width=\textwidth]{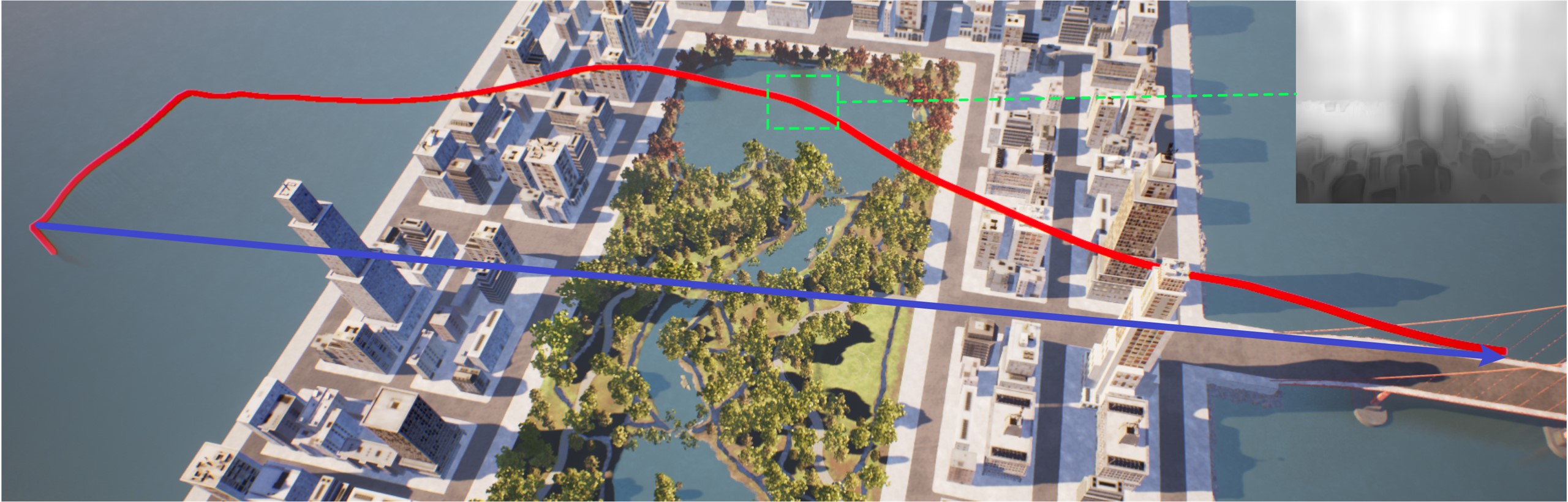}
        (d)
    \end{minipage}

    \vspace{0.5cm}

    \begin{minipage}[b]{0.45\textwidth}
        \centering
        \includegraphics[width=\textwidth]{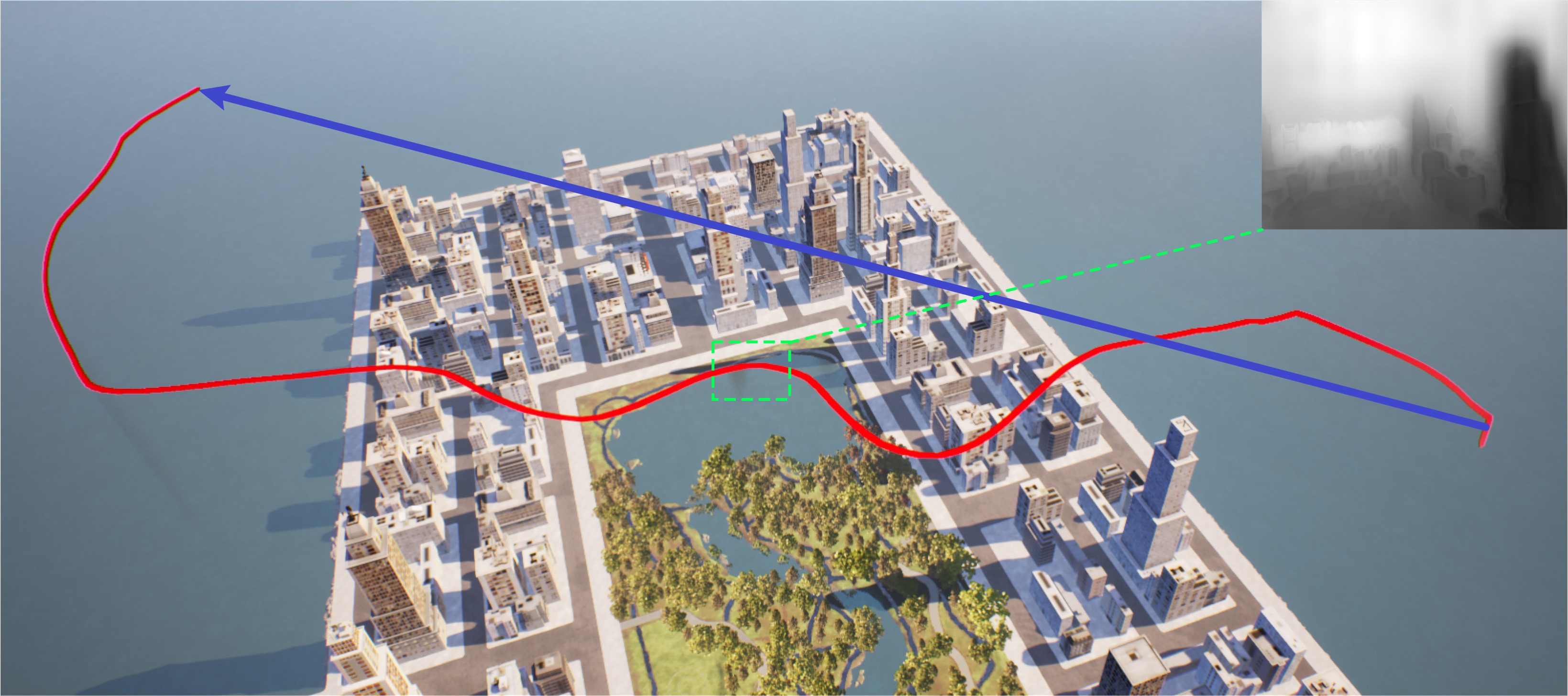}
        (e)
    \end{minipage}
    \hfill
    \begin{minipage}[b]{0.45\textwidth}
        \centering
        \includegraphics[width=\textwidth]{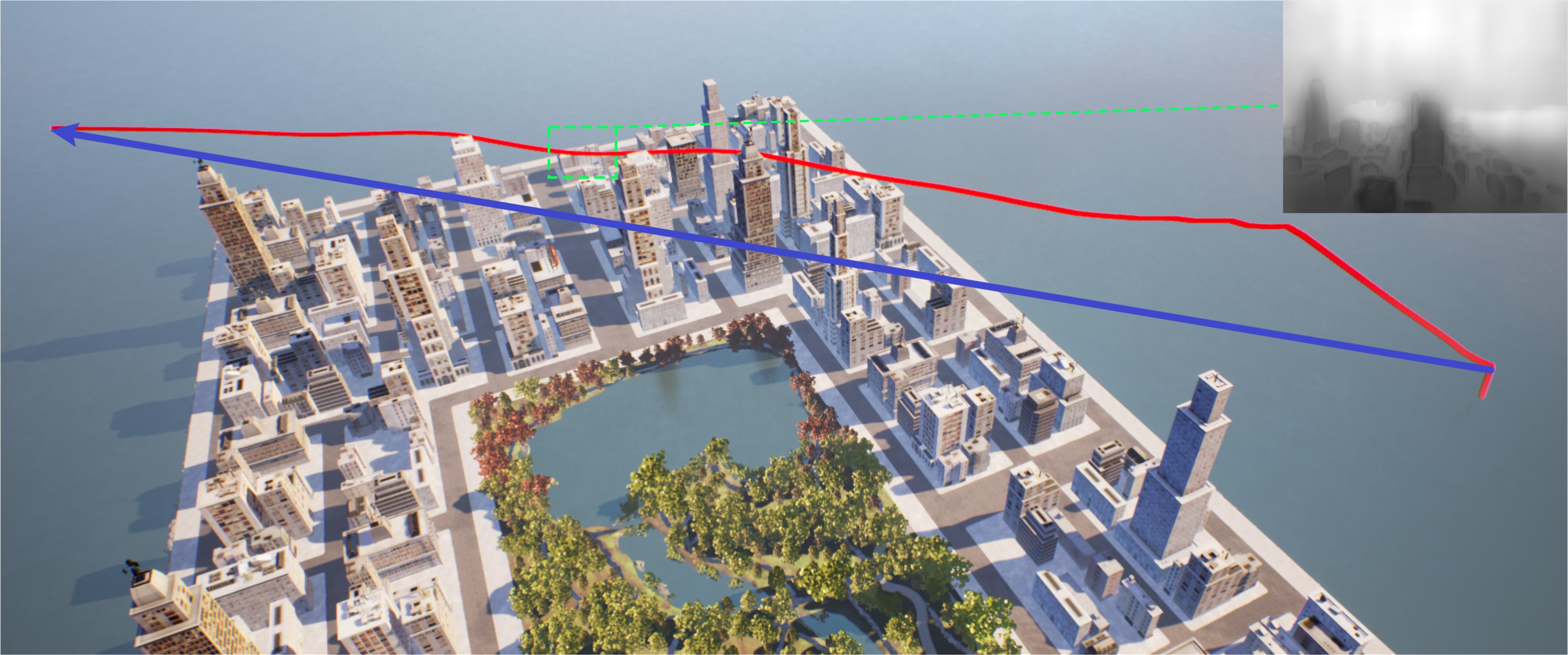}
        (f)
    \end{minipage}
    
    \caption{The comparison of the impact of different reward functions on obstacle avoidance flight trajectories. The red solid lines represent the flight trajectories of the fixed-wing UAV generated by the decision-making process of deep reinforcement learning (DRL) algorithms. The blue solid lines with arrows represent the expected flight trajectories, which point from the take-off points toward the target points. The green dashed lines represent the inferred depth map during obstacle avoidance maneuvers. (a), (c), and (e) show the obstacle avoidance trajectories generated by the model that only uses $r_{\rm{dis}}$, while (b), (d), and (f) show the obstacle avoidance trajectories produced by the model trained using the proposed reward function.}
    \label{fig:six_images}
\end{figure*}
\subsection{Training Settings}
    We conduct the training on a machine equipped with an Intel Xeon E5-2678 V3 CPU and two NVIDIA RTX 3090 GPU. A high-fidelity simulator, AirSim\cite{airsim2017fsr} build on Unreal Engine (UE), is employed to build the different environments and provide data including RGB images captured by its camera and fixed-wing UAV’s position. The fixed-wing UAV's dynamics model is provided by JSBSim\cite{berndt2004jsbsim}, an open-source platform widely regarded for its high accuracy in modeling aerodynamics and flight physics. The specific model used in this work is the Skywalker X8, a popular choice for its stability and versatility in various flight scenarios. The neural network models are established using the PyTorch framework.

We conduct the training of the proposed method with the parameters shown in Table \ref{tab:PPO_Settings} within a 1000m by 600m rectangular urban environment constructed using UE. In the experiments, target points are randomly selected from three predefined flight paths. As shown in Fig. \ref{fig:train_lines}, varying numbers of obstacles are distributed along the three flight paths. The variation in obstacle density across the routes simulates the challenge of avoidance faced by fixed-wing UAVs in environments with different levels of obstacle density. Image data are collected using a simulated camera provided by AirSim, generating color images with a resolution of 480×640 for the depth estimation module.

\subsection{Ablation Studies}
In this section, we study the importance of various design modules in our framework.
\subsubsection{Inferred reward function}
We evaluate the contribution of the designed reward function to the smoothness and stability of flight trajectories. First, we employ only the distance reward function, assigning it a weight of $C_3$=1 and referring to this configuration as the distance model. Subsequently, we compare the smoothness of the flight trajectories generated by the distance model and the proposed model along three predefined flight paths. As illustrated in Fig. \ref{fig:six_images} (b), (d), and (f), the stability that observed in the proposed model's trajectories can be attributed to the carefully designed reward function, which balances multiple factors such as obstacle avoidance and trajectory smoothness. In contrast, the trajectories generated by the model using only $r_{\rm{dis}}$ exhibit more abrupt directional changes, as highlighted by the jagged red solid lines in Fig. \ref{fig:six_images} (a), (c), and (e). Such rapid course corrections can impose additional strain on the fixed-wing UAV's control system, which can lead to potential instability, particularly in complex urban environments. By incorporating a smoothness criterion into the reward structure, the proposed model effectively reduces the necessity for drastic course adjustments, enabling the UAV to follow a more fluid and consistent trajectory. 
\begin{table*}[htbp]
\centering
\caption{Task Completion Results of Different Obstacle Avoidance Strategies in Different Scenes}
\begin{tabular}{cc|c|c}
\toprule
\multicolumn{1}{c}{\textbf{Success Rate (\%, $\downarrow$)}} & \multicolumn{1}{c|}{\textbf{Scene 1: City}} & \multicolumn{1}{c|}{\textbf{Scene 2: Line-cruising}} & \multicolumn{1}{c}{\textbf{Scene 3: Valley}}\\
\midrule
\textbf{Proposed} & \textbf{86.0} & \textbf{80.0} & \textbf{74.0} \\
\textbf{PPO} & 82.0 & 76.0 & 69.0 \\
\textbf{TRPO} & 80.0 & 74.0 & 68.0 \\
\textbf{A3C} & 78.0 & 72.0 & 66.0 \\
\textbf{DQN} & 77.0 & 70.0 & 64.0 \\
\textbf{DDPG} & 76.0 & 68.0 & 62.0 \\
\bottomrule
\end{tabular}
\label{tab:results}
\end{table*}
\begin{figure}[htp]
    \centering
    \includegraphics[width=1.0\linewidth]{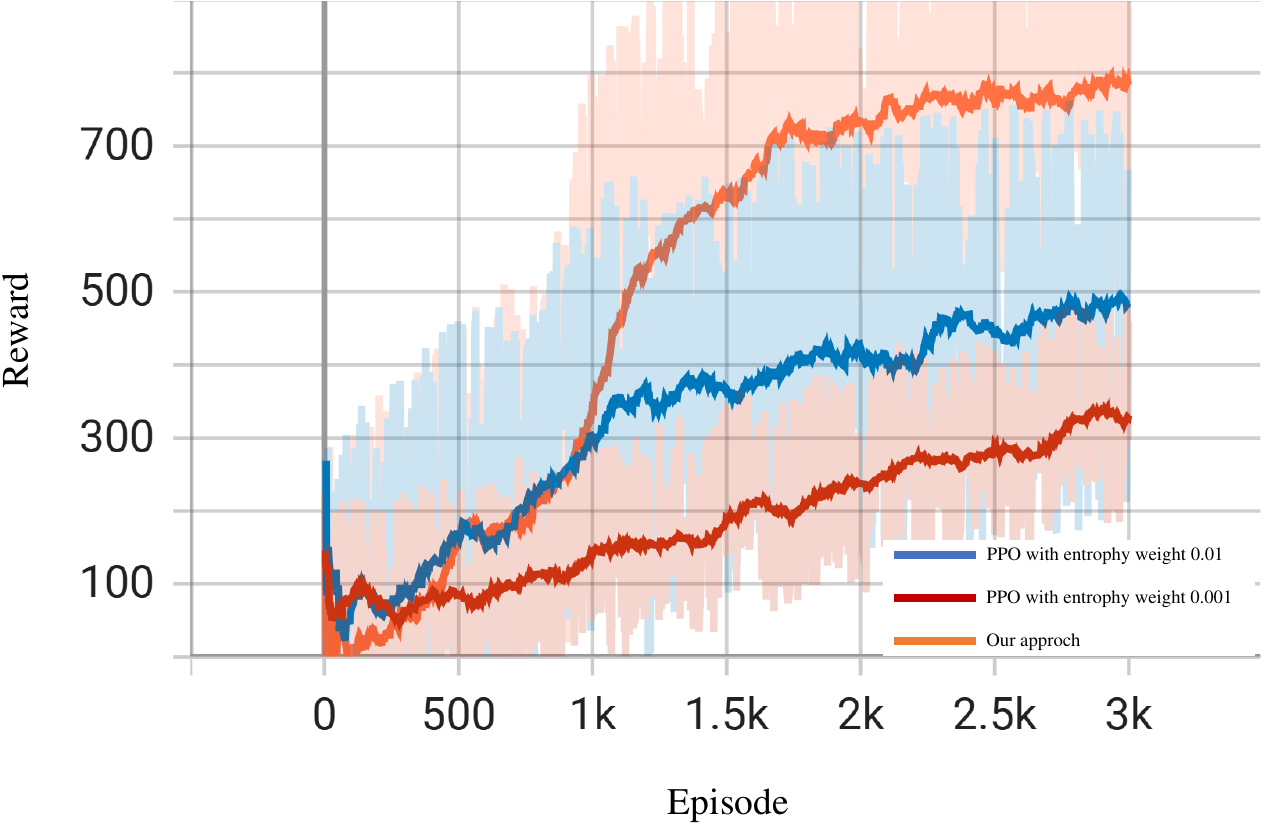}
    \caption{Training  cumulative rewards comparison. The solid lines
    represent the average rewards of our algorithms and baselines per episode, while the shaded areas indicate the variability in the reward accumulation for each method.}
    \label{fig:reward}
\end{figure}
This improvement in flight stability is critical in real-world scenarios, where maintaining smooth trajectories helps minimize energy consumption and ensures safer navigation through dynamic and uncertain environments. Thus, the results clearly demonstrate the superiority of the proposed reward function in producing smoother and more stable flight paths for fixed-wing UAVs, ultimately enhancing overall flight performance in obstacle-rich environments.

\subsubsection{Adaptive entropy}
To evaluate the impact of the proposed Adaptive Entropy on the obstacle avoidance tasks, we design a strategy comparison experiment. We respectively train the PPO models with entropy weights of 0.01 and 0.001, and then compared their rewards per episode during training with the proposed method. The rewards obtained during training are shown in Fig. \ref{fig:reward}. The lower weight model shows a gradual improvement in performance, starting with relatively low rewards and steadily increasing as the training progresses, indicates moderate variability, suggesting consistent learning behavior. The higher weight, while starting at a lower initial reward, exhibits a steady increase over time, indicating higher variability in performance, particularly in the earlier stages of training. Our proposed method demonstrates the fastest learning curve, with cumulative rewards increasing rapidly in the early episodes. By the end of the training, it converges at the highest cumulative reward. The shaded area around the orange curve is relatively narrow, indicating low variability and suggesting that the proposed method is more stable and consistent across different episodes.

These results indicate that the proposed method outperforms PPO model with different entrophy weights in terms of cumulative rewards, demonstrating its effectiveness in navigating the fixed-wing UAVs through environments with obstacles. Additionally, the faster convergence of the proposed method shows its potential for quicker policy learning, making it suitable for pratical applications where rapid adaptation is critical.

\subsection{Policy Comparison}
To evaluate the effectiveness of our proposed fixed-wing UAV obstacle avoidance method, we conducted tests in three distinct scenarios named as Scene 1 (City), Scene 2 (Line-cruising), and Scene 3 (Valley). In the first scene place, the fixed-wing UAV in an urban environment, where it have to fly through densely packed buildings and avoid structural obstacles as illustrated in Fig. \ref{fig:Compare_images}(a). In the second scene, simulating a power line inspection in mountainous terrain, the fixed-wing UAV’s primary task is to avoid obstacles such as mountainous ridges and power poles while maintaining proximity to power lines as depicted in Fig. \ref{fig:Compare_images}(b). In the third scene, the fixed-wing UAV face a desert canyon with dynamic terrain changes, where the algorithm has to account for steep ascents and descents while avoiding natural formations such as cliffs and ridges as shown in Fig. \ref{fig:Compare_images}(c). These scenarios represent varying levels of environmental complexity, incorporating differences in obstacle types, numbers, and distributions.

We compare our proposed method with several established reinforcement learning algorithms, including PPO, TRPO, A3C, DQN, and DDPG. All algorithms are tested in the same initial simulation environment, with repeated trials in each scenario to ensure robust results. In each scenario, the agent's task is to fly from a starting position to a target without colliding with any obstacles. The performance of each algorithm is measured using the task completion rate (Success Rate, \%), which represents the percentage of trials in which the agent successfully completed the task. Each test is repeated 100 times per scenario to minimize the impact of randomness on the results. Table \ref{tab:results} presents the task completion rates for our proposed method and the other strategies across the three scenarios. The results demonstrate that our proposed approach outperforme the baseline algorithms in all scenarios. Specifically, in the City and Line-cruising scenarios, our method achieved task completion rates of 86.0\% and 80.0\%, respectively, which are higher than those of the other algorithms. In the more complex Valley scenario, our approach still maintain a strong performance with a 74.0\% success rate.

In comparison, the PPO algorithm achieves 82.0\%, 76.0\%, and 69.0\% in the three scenarios, which is slightly lower than our method. Other algorithms such as TRPO, A3C, DQN, and DDPG perform relatively worse, with success rates below 80.0\% in all scenarios. Notably, in the Valley scenario, DDPG exhibit the lowest success rate of 62.0\%.The experimental results indicate that our proposed method is capable of effectively handling different levels of obstacle complexity across various environments, achieving higher task completion rates than existing reinforcement learning strategies. We hypothesize that the superior performance of our method, particularly in complex environments, can be attributed to its adaptive strategy optimization and efficient exploration mechanism. While the performance of all algorithms is comparable in simpler scenarios, such as Line-cruising, our method shows a significant advantage in more complex scenarios like City and Valley.
\begin{figure}[htp]
    \centering
    \includegraphics[width=1.0\linewidth]{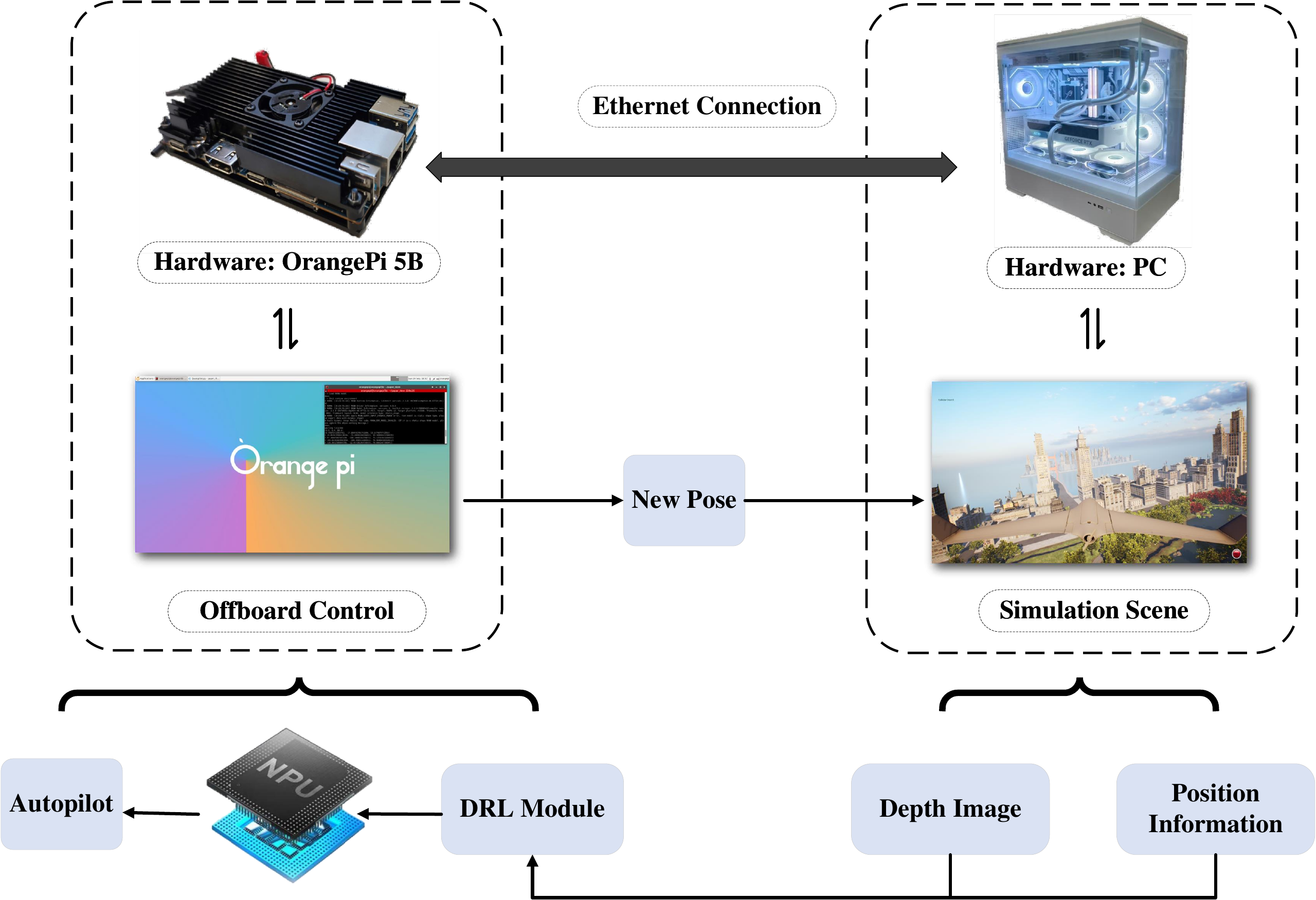}
    \caption{Hardware-in-the-loop (HIL) platform structure. The platform consists of hardware components, where the Orange Pi 5B and PC communicate via an Ethernet connection. The software components include the offboard control module and simulation scene module, used for control and decision validation in a simulated environment.}
    \label{fig:platform}
\end{figure}
\subsection{Hardware-in-the-loop Simulation}
A hardware-in-the-loop simulation experiment is conducted to demonstrate the deployability of the proposed algorithm. Additionally, a comparison with sample-based algorithms \cite{paranjape2015motion} is made to validate the performance of the proposed approach. The simulation platform consists of a computer equipped with an Intel i5-13600KF CPU and an NVIDIA RTX 4070Ti SUPER GPU, acting as the primary simulation unit. The onboard edge computing platform, the OrangePi 5B, equipped with a Rockchip RK3588s processor and a Neural Processing Unit (NPU), is used to execute real-time inference of the trained model. The model is initially trained and converted to the RKNN format using the RKNN-Toolkit2 (v2.1.0) and deployed via the Python API. The experimental platform is shown in Fig. \ref{fig:platform} and validation scenes are the same as the aforementioned \textit{Policy Comparison} tests.

\begin{figure}[htp]
    \centering
    % 第一行子图
    \begin{minipage}[b]{0.48\textwidth}
        \centering
        \includegraphics[width=\linewidth]{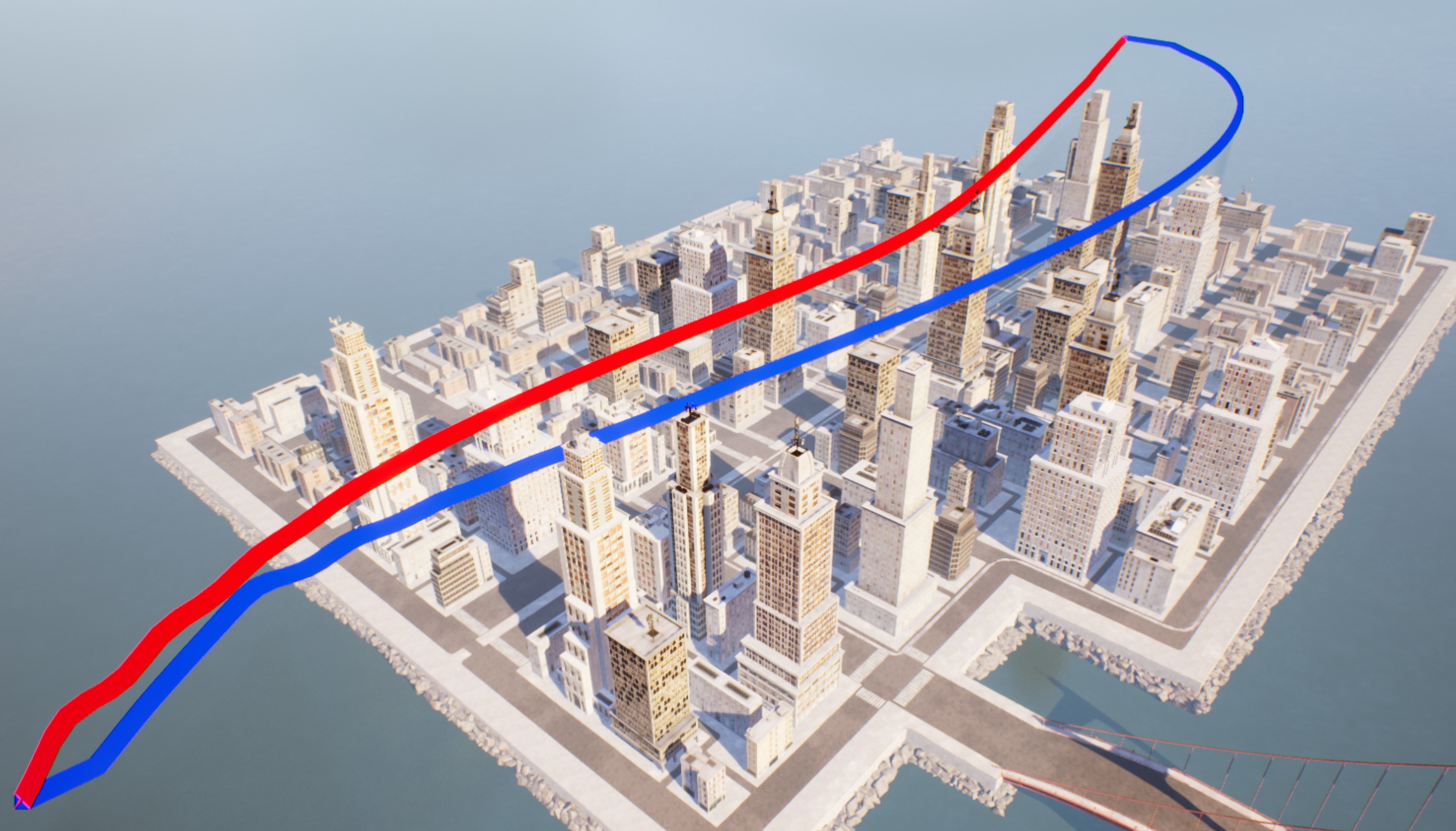} % 替换为第一张图片路径
        \caption*{(a) Scene I} % 使用 * 让标题不出现在列表中
        \label{fig:C_sub1}
    \end{minipage}
    \hfill
    \begin{minipage}[b]{0.48\textwidth}
        \centering
        \includegraphics[width=\linewidth]{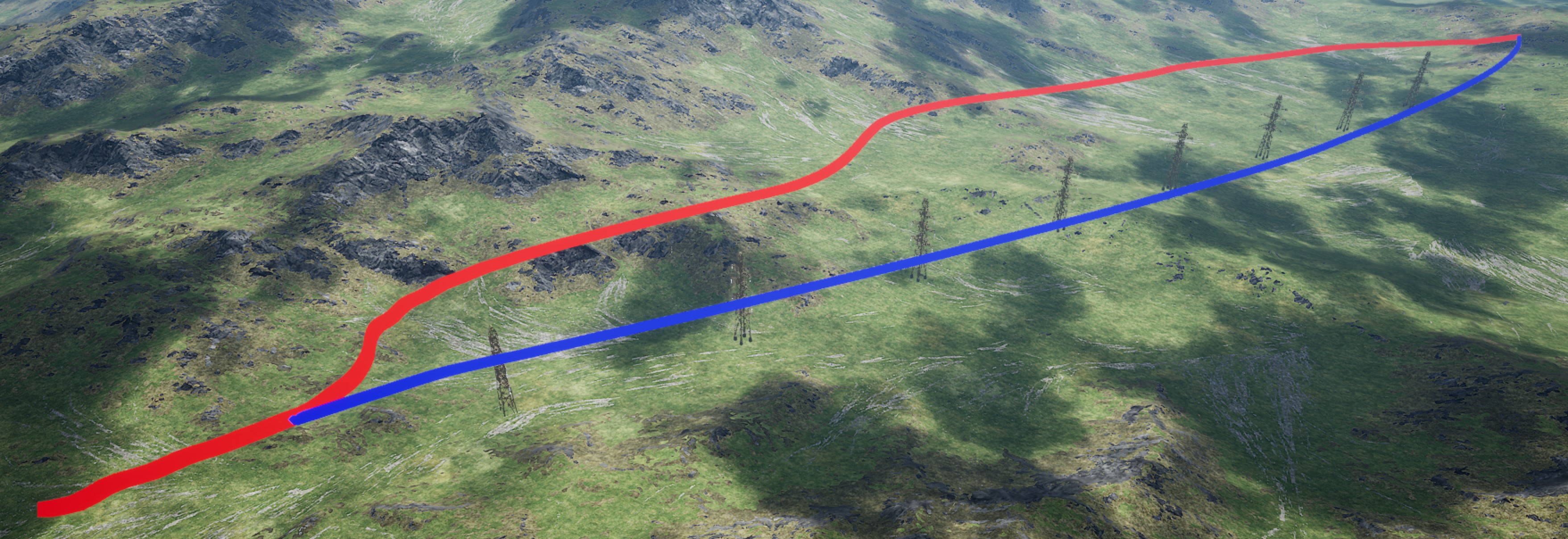} % 替换为第二张图片路径
        \caption*{(b) Scene II}
        \label{fig:C_sub2}
    \end{minipage}
    \hfill
    % 第二行子图
    \begin{minipage}[b]{0.48\textwidth}
        \centering
        \includegraphics[width=\linewidth]{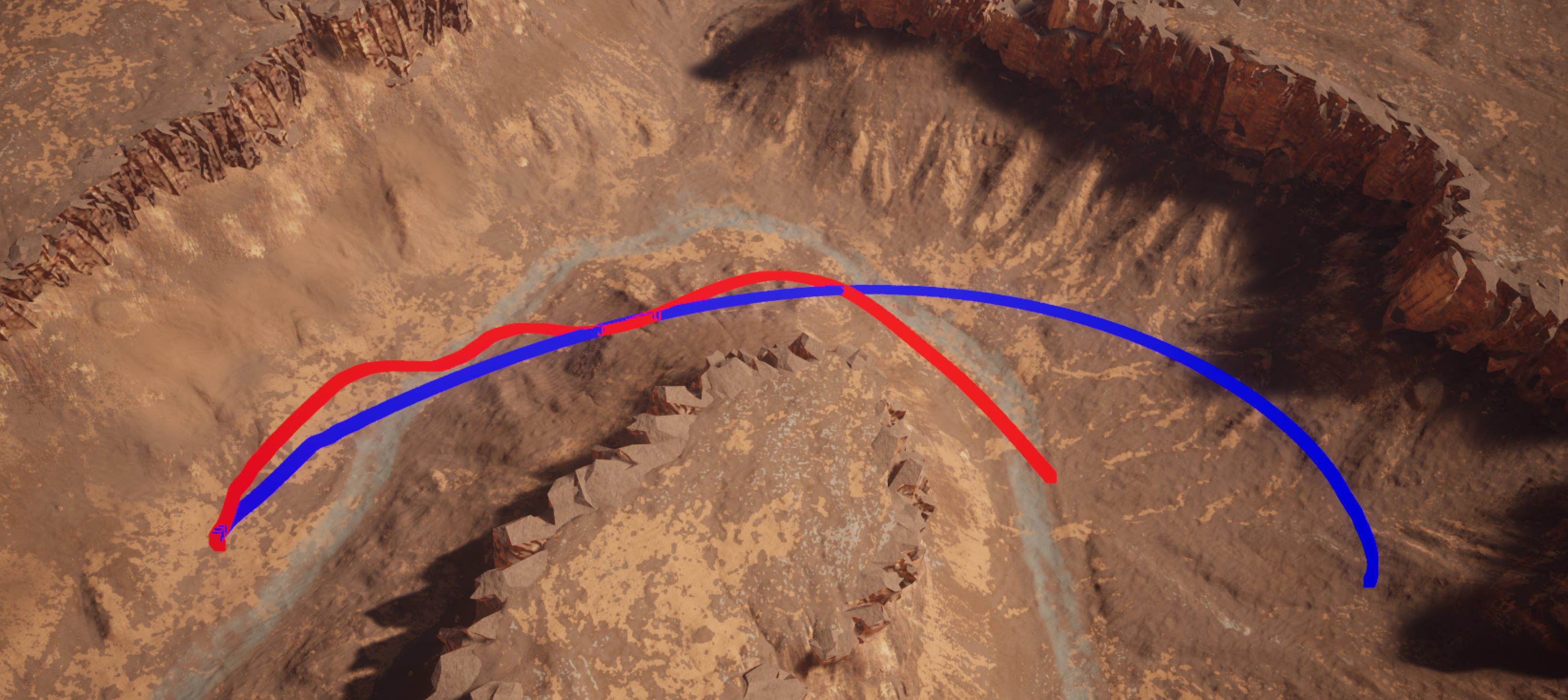} % 替换为第三张图片路径
        \caption*{(c) Scene III}
        \label{fig:C_sub3}
    \end{minipage}

    \caption{HIL comparison between proposed method and sample based method in different scenes. The red line represents the flight trajectory generated by our proposed method, while the blue line represents the sample based method.}
    \label{fig:Compare_images}
\end{figure}

\subsubsection{Methods Comparison}
The comparison of flight trajectories is presented in Fig. \ref{fig:Compare_images}, showcasing the performance of both algorithms across the three environments. In the first scenario, with dense obstacles, the proposed algorithm produces a smoother and shorter path, as illustrated in Fig. \ref{fig:Compare_images}(a), while the sample-based method tends to choose regions with fewer obstacles for its flight path. As depicted in Fig. \ref{fig:Compare_images}(b), where the obstacles are sparsely distributed, the sample-based method is able to generate a trajectory that is closer to the expected flight path compared to the proposed method. In the third scenario, although the sample-based method generates a smoother trajectory, its turning capability is limited due to sampling only within the visible area, preventing it from navigating through the canyon, as shown in Fig. \ref{fig:Compare_images}(c). In contrast, the proposed algorithm has a broader range of options, significantly improving its turning ability.

\subsubsection{Quantitative Analysis}
The quantitative analysis of the results is presented in Fig. \ref{fig:qu_Analists}, where the X and Y coordinate distributions of the flight trajectories are plotted for each scene. The proposed algorithm's trajectory (solid line) is compared with the expected trend (dashed line) across all three environments. As shown in Fig. \ref{fig:qu_Analists}(a), the deviations between the proposed trajectory and the expected trend were minimal, indicating a close adherence to the optimal flight path in less challenging environments. In Scene II, the proposed method produced a noticeably smoother trajectory with fewer oscillations, especially in the Y coordinate distribution, demonstrating its superiority in densely cluttered environments, as depicted in Fig. \ref{fig:qu_Analists}(b). This trend contrasts sharply with the results shown in Fig. \ref{fig:qu_Analists}(c), where the greatest divergence between the two methods is evident. The proposed algorithm's ability to execute sharp turns and navigate narrow spaces enabled it to successfully complete the flight path, in contrast to the sample-based method, which struggled in this scenario.

Overall, the experimental results show that the proposed algorithm outperforms the sample-based method in more complex environments (Scene II and Scene III), particularly in terms of trajectory smoothness and path length. While the sample-based method performs better in simpler environments with sparse obstacles (Scene I), the proposed algorithm demonstrates greater adaptability and robustness in challenging, real-world scenarios. These findings suggest that  the proposed algorithm's ability to make quick decisions in real-time, demonstrating its potential for deployment in real-world edge-based fast moving fixed-wing UAV systems where obstacle avoidance is critical.

\section{Conclusion}
In this paper, we present a lightweight DRL framework that leverages inferred single-frame depth maps as input and employs a lightweight network architecture to address the obstacle avoidance challenges of high-speed fixed-wing UAVs.
Our framework incorporates an inferring reward function to address the stability 
\begin{figure}[t]
    \centering
    % 第一行子图
    \begin{minipage}[b]{0.48\textwidth}
        \centering
        \includegraphics[width=\linewidth]{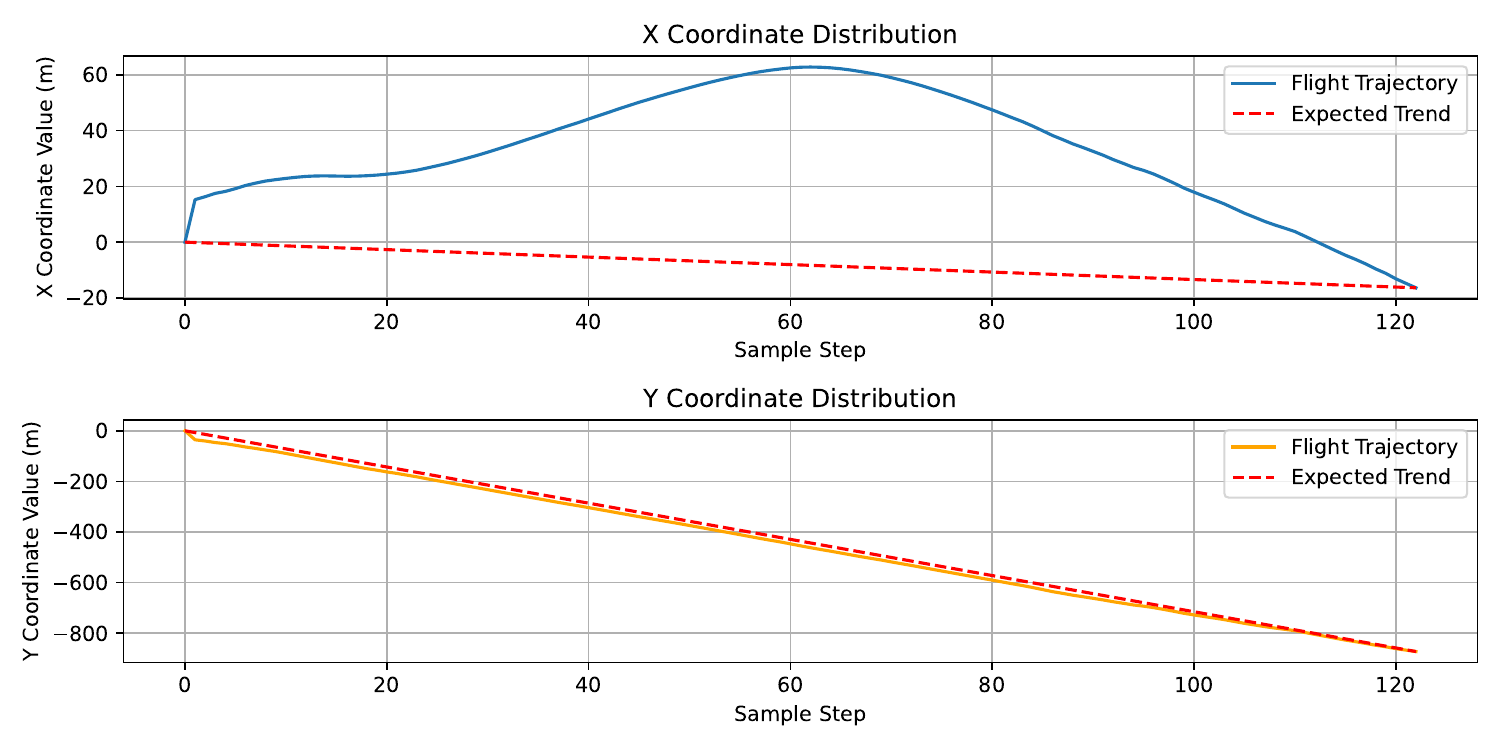} % 替换为第一张图片路径
        \caption*{(a) Scene I} % 使用 * 让标题不出现在列表中
        \label{fig:sub1}
    \end{minipage}
    \hfill
    \begin{minipage}[b]{0.48\textwidth}
        \centering
        \includegraphics[width=\linewidth]{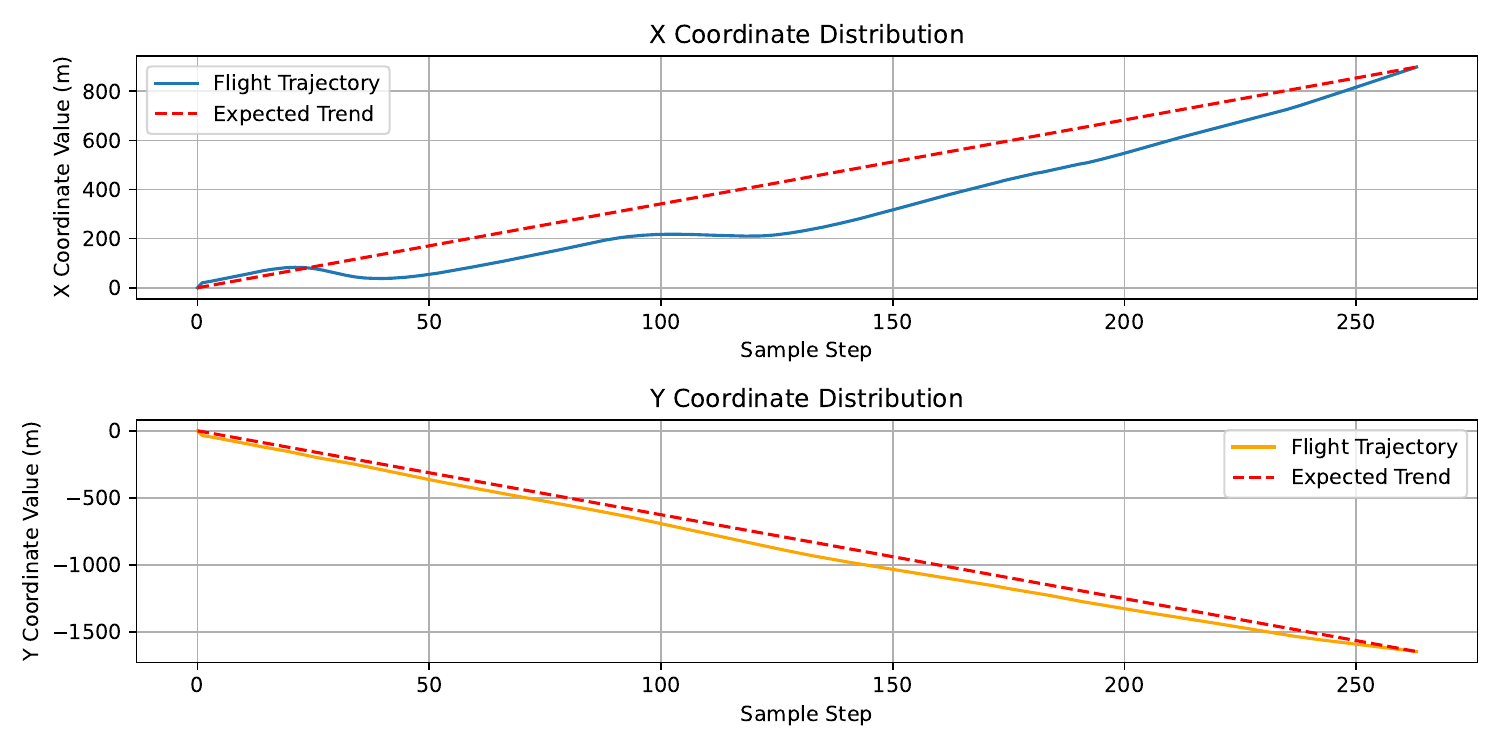} % 替换为第二张图片路径
        \caption*{(b) Scene II}
        \label{fig:sub2}
    \end{minipage}
    \hfill
    % 第二行子图
    \begin{minipage}[b]{0.48\textwidth} \centering
        \includegraphics[width=\linewidth]{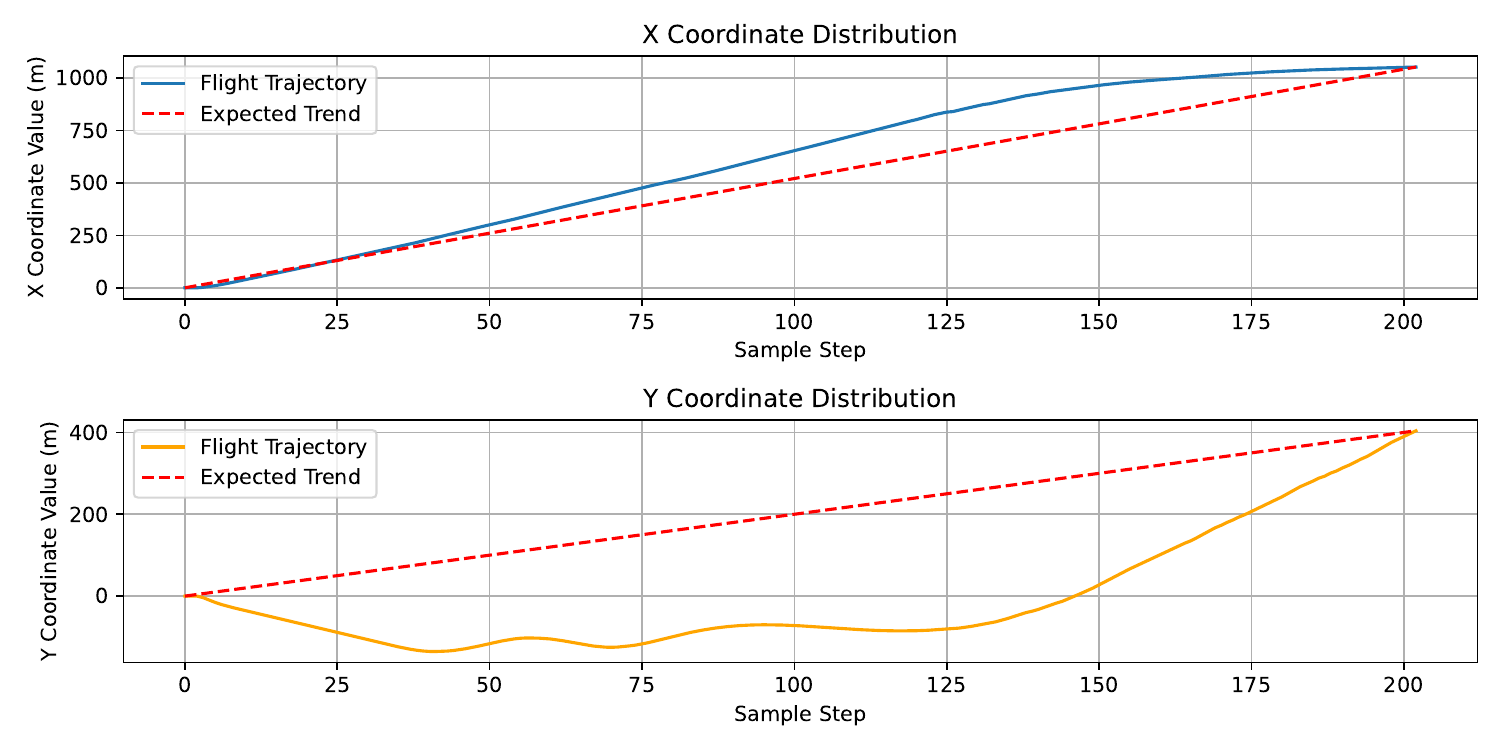} % 替换为第三张图片路径
        \caption*{(c) Scene III}
        \label{fig:sub3}
    \end{minipage}

    \caption{HIL flight trajectory and coordinate distributions across three distinct scenes. Each scene shows the fixed-wing UAV's trajectory in a virtual environment along with its X and Y coordinate distributions compared to the expected trend.}
    \label{fig:qu_Analists}
\end{figure}
and dynamic constraints of fixed-wing UAVs, along with an adaptive entropy-based strategy update mechanism to balance exploration and exploitation during training. 
The proposed method is tested in various scenarios through hardware-in-the-loop simulations and compared with other reinforcement learning algorithms. 
The experimental results demonstrated that our framework significantly outperforms these algorithms in terms of obstacle avoidance effectiveness and trajectory smoothness. 
Despite the promising results, our study has certain limitations. The reliance on an inferred depth map may affect the accuracy of obstacle detection, particularly in environments with sudden, small obstacles. In the future, we plan to deploy the proposed algorithm on a real vertical take-off and landing (VTOL) fixed-wing UAV to validate its feasibility in real-world scenarios.

%\section{Acknowledgments}
%This study was co-supported by the Open Fund of the National Natural Science Foundation of China (No. 62073264) and the Key Research and Development Project of Shaanxi Province (No. 2021ZDLGY01-01). 

\appendices
%\section{Proof of the First Zonklar Equation}
%Appendix one text goes here.

% you can choose not to have a title for an appendix
% if you want by leaving the argument blank
%\section{}
%Appendix two text goes here.
% use section* for acknowledgment
%section*{Acknowledgment}
%%%he authors would like to thank...

% Can use something like this to put references on a page
% by themselves when using endfloat and the captionsoff option.
\ifCLASSOPTIONcaptionsoff
  \newpage
\fi

% trigger a \newpage just before the given reference
% number - used to balance the columns on the last page
% adjust value as needed - may need to be readjusted if
% the document is modified later
%\IEEEtriggeratref{8}
% The "triggered" command can be changed if desired:
%\IEEEtriggercmd{\enlargethispage{-5in}}

% references section

% can use a bibliography generated by BibTeX as a .bbl file
% BibTeX documentation can be easily obtained at:
% http://mirror.ctan.org/biblio/bibtex/contrib/doc/
% The IEEEtran BibTeX style support page is at:
% http://www.michaelshell.org/tex/ieeetran/bibtex/
%\bibliographystyle{IEEEtran}
% argument is your BibTeX string definitions and bibliography database(s)
%\bibliography{IEEEabrv,../bib/paper}
%
% <OR> manually copy in the resultant .bbl file
% set second argument of \begin to the number of references
% (used to reserve space for the reference number labels box)
\bibliographystyle{unsrt}
\bibliography{references}

% biography section
% 
% If you have an EPS/PDF photo (graphicx package needed) extra braces are
% needed around the contents of the optional argument to biography to prevent
% the LaTeX parser from getting confused when it sees the complicated
% \includegraphics command within an optional argument. (You could create
% your own custom macro containing the \includegraphics command to make things
% simpler here.)
%\begin{IEEEbiography}[{\includegraphics[width=1in,height=1.25in,clip,keepaspectratio]{mshell}}]{Michael Shell}
% or if you just want to reserve a space for a photo:

% You can push biographies down or up by placing
% a \vfill before or after them. The appropriate
% use of \vfill depends on what kind of text is
% on the last page and whether or not the columns
% are being equalized.

%\vfill

% Can be used to pull up biographies so that the bottom of the last one
% is flush with the other column.
%\enlargethispage{-5in}

% that's all folks

\end{document}